\newtheorem{lemma}{Lemma}
\pgfplotsset{compat=1.18} 
\journal{European Journal of Operational Research}
\begin{document}

\begin{frontmatter}



\title{Classical and Deep Reinforcement Learning Inventory Control Policies for Pharmaceutical Supply Chains with Perishability and Non-Stationarity}


\author[1,2]{Francesco Stranieri\corref{cor1}}
\author[3]{Chaaben Kouki}
\author[4]{Willem van Jaarsveld}
\author[2]{Fabio Stella}

\cortext[cor1]{Corresponding author. E-mail address: \url{francesco.stranieri@polito.it}.}

\affiliation[1]{organization={University of Milano-Bicocca},
                city={Milan},
                postcode={20126},
                country={Italy}}

\affiliation[2]{organization={Polytechnic of Turin},
                city={Turin},
                postcode={10129},
                country={Italy}}
                
\affiliation[3]{organization={ESSCA School of Management},
                city={Angers},
                postcode={49000},
                country={France}}
                
\affiliation[4]{organization={Eindhoven University of Technology},
                city={Eindhoven},
                postcode={5612},
                country={Netherlands}}

\begin{abstract}
We study inventory control policies for pharmaceutical supply chains, addressing challenges such as perishability, yield uncertainty, and non-stationary demand, combined with batching constraints, lead times, and lost sales. Collaborating with Bristol-Myers Squibb (BMS), we develop a realistic case study incorporating these factors and benchmark three policies--order-up-to (OUT), projected inventory level (PIL), and deep reinforcement learning (DRL) using the proximal policy optimization (PPO) algorithm--against a BMS baseline based on human expertise. We derive and validate bounds-based procedures for optimizing OUT and PIL policy parameters and propose a methodology for estimating projected inventory levels, which are also integrated into the DRL policy with demand forecasts to improve decision-making under non-stationarity. Compared to a human-driven policy, which avoids lost sales through higher holding costs, all three implemented policies achieve lower average costs but exhibit greater cost variability. While PIL demonstrates robust and consistent performance, OUT struggles under high lost sales costs, and PPO excels in complex and variable scenarios but requires significant computational effort. The findings suggest that while DRL shows potential, it does not outperform classical policies in all numerical experiments, highlighting 1) the need to integrate diverse policies to manage pharmaceutical challenges effectively, based on the current state-of-the-art, and 2) that practical problems in this domain seem to lack a single policy class that yields universally acceptable performance.
\end{abstract}



\begin{keyword}
inventory \sep deep reinforcement learning \sep inventory control policies \sep perishable systems \sep non-stationarity

\end{keyword}

\end{frontmatter}


\section{Introduction}
Pharmaceuticals are closely tied to patient health, highlighting the critical importance of accurate inventory control policies in supply chains. However, managing medical product inventories is challenging due to the interaction of multiple \textit{factors}, including random yields, perishability, batching constraints, non-stationary demand caused by product life cycles, and lost sales \citep[see also][]{song2023research}. Although the impact of these factors in isolation is reasonably well understood \citep[see, e.g.,][]{Broekmeulen2009,Sonntag2016,Sonntag2018,Gorria2022}, what is their relative importance, and how do they interact in real-world pharmaceutical supply chains? Moreover, while several policies in the literature address one of these challenges individually, is it clear how to adapt them to realistic supply chains that feature multiple overlapping challenges? What performance can be expected from different types of policies?

To address these questions, we developed a realistic case study in close collaboration with a senior supply chain manager from Bristol-Myers Squibb (BMS), a global manufacturer and distributor of medical products that faces typical pharmaceutical inventory challenges. The case study focuses on a specific product for which production yield uncertainty, product lifetime, and batching considerations are quantified using company data and expert input. BMS faces unique demand patterns for each product, necessitating an inventory management process that involves demand forecasting and timely ordering from manufacturing facilities. The case study incorporates a baseline policy based on expert human planners at BMS, who rely on forecasting models to predict demand uncertainty and fine-tune inventory levels to prevent and mitigate the risks of excess stock, product expiration, and lost sales. To explore the impact of demand uncertainty and non-stationarity, we used \textit{synthetic data} from BMS regarding product life cycles to develop a realistic demand process covering the product's lifetime for 20 years and a variant with a shorter lifetime of 5 years. For parameters that are challenging to estimate accurately, \textit{sensitivity analyses} were conducted over a broad range of values centered around company-provided data. This approach enabled the creation of a set of experiments that effectively represent the complexities of managing medical product inventories, including considerations of batching, product lifetime, and yield uncertainty. The resulting case study provides a robust foundation for answering key questions about inventory policies in pharmaceutical supply chains.

In detail, we contribute to the literature by adapting three general-purpose policies: \textit{i)} an order up-to-level (OUT) policy, with a bounds-based search procedure to determine appropriate safety stock levels for non-stationary demand; \textit{ii)} a projected inventory level (PIL) policy, which has been shown to perform well in settings involving lost sales and perishability, supported by a bounds-based procedure to optimize policy parameters; and \textit{iii)} a deep reinforcement learning (DRL) approach, implemented via the proximal policy optimization (PPO) algorithm \citep{schulman2017proximal}, which has been promoted as a general-purpose solution for inventory management, though its practical applications remain limited \citep{Boute2022}. To successfully train the DRL algorithm, we introduce a novel method for designing features in the presence of non-stationarity, lost sales, and product expiration. Departing from the standard practice of using the inventory vector directly as input to the neural network \citep[see, e.g.,][]{Oroojlooyjadid2022,Gijsbrechts2022,DeMoor2022,Kaynov2024,Stranieri2024_IJPE,Stranieri2024}, we estimate future projected inventory levels for each age category based on the current state and use these estimates as input. Furthermore, we incorporate specific time-dependent demand features consistent with demand forecasts used by human planners, enabling PPO to account for the current life-cycle phase accurately.

In our experiments, we benchmark the performance of these three policies against a BMS baseline derived from human planners' actions. This comparison yields new and valuable insights into inventory management for medical products:
\begin{itemize}
    \item OUT and PIL policies can be readily applied to \textit{pharmaceutical supply chains} using the bounds-based optimization procedures developed in this paper, while PPO can be successfully implemented based on the designed features. All three policies demonstrate competitive performance.
    \item While there are considerable performance differences among the three solution approaches, none consistently outperforms the others. In fact, each policy (OUT, PIL, and PPO) is outperformed by more than 10\% in specific, realistic experiments by one of the other policies. The OUT policy, in particular, can perform poorly when lost sales costs are high, with performance gaps exceeding 100\%. These findings imply:
    \textit{a)} Companies are advised to explore multiple policy types when addressing pharmaceutical inventory challenges and should \textit{avoid relying solely on the OUT policy}.
    \textit{b)} Despite decades of inventory research, practical inventory problems in this domain appear to \textit{lack a single policy class that delivers universally acceptable performance}, let alone an interpretable, near-optimal policy.
    \item Although DRL algorithms are often promoted as general-purpose solutions for complex, realistic inventory problems \citep{Boute2022,Vanvuchelen2020}, we find PPO to be competitive but not consistently superior to classical policies across all experiments.
    \item The proposed OUT, PIL, and PPO policies can significantly reduce total company costs by maintaining lower inventory levels. However, although they outperform the human-driven policy in terms of average cost, they exhibit higher \textit{performance variability}, which may render them less robust in certain scenarios. Our findings suggest that human planners focus on maintaining high service levels to avoid lost sales and ensure patient health, often achieving this by increasing safety stock levels. This practice reduces the risk of unsatisfied demand but results in higher holding costs. Enhancing the proposed policies with safeguards to address significant demand uncertainty could improve BMS results by balancing cost and service level requirements.
\end{itemize}

The remainder of this paper is structured as follows: Section \ref{sec:section2} presents a comprehensive review of recent literature on classical policies and DRL. Section \ref{sec:section3} outlines the mathematical formulation of the pharmaceutical supply chain developed in collaboration with BMS, while Section \ref{sec:section4} details and formalizes the OUT and PIL policies and describes the implementation of the PPO algorithm. Section \ref{sec:section5} presents the results of numerical experiments evaluating the performance of the implemented policies. Finally, Section \ref{sec:section6} concludes the paper.

\section{Related Work} \label{sec:section2}
Our case study is motivated by a realistic inventory problem arising in pharmaceutical supply chains, which involves multiple factors, including perishability and yield uncertainty. While \textit{random yield} is a well-studied problem in the literature when considered independently \citep{Sonntag2016,Sonntag2018,Berling2022}, its interaction with \textit{perishability}, as observed in our real-world case study, has received limited attention. To position our work within these research streams, we review studies that address these factors in-depth, along with a concise overview of OUT and PIL policies and the application of DRL algorithms in inventory management.

\subsection{Classical Policies}
Perishable inventory systems have been extensively studied since the seminal work of \cite{van1963inventory}. A key challenge in managing such systems is tracking the age categories of stock to determine optimal order quantities. This process significantly increases the problem's dimensionality, making it computationally complex. Commonly used approaches, such as the OUT policy, are suboptimal because order quantities must consider the age distribution of existing stock \citep{song2023research}. \cite{Nahmias1975} contributed by analyzing optimal policies for perishable systems with zero lead times, emphasizing the trade-off between holding and expiration costs. Managing perishable products remains challenging because the need to account for all age categories makes exact solutions through dynamic programming infeasible, especially when positive lead times are involved or product lifetimes exceed two periods. Consequently, much subsequent research has focused on developing heuristic or approximation-based approaches \citep{Chao2015}.

Early research by \cite{Nandakumar1993} pioneers near-myopic heuristics to address the computational challenges of dynamic programming in managing fixed-lifetime perishable products. The estimated withdrawal and aging (EWA) policy proposed by \cite{Broekmeulen2009} represents a significant advancement. This policy accounts for positive lead times and varying demand patterns while estimating the quantity of expired items during lead times. By explicitly considering the impact of product expiration, the EWA policy significantly improves upon the standard OUT policy, resulting in better cost performance. Further exploration of the EWA policy by \cite{Haijema2019} highlights the benefits of incorporating estimated expired quantities into base-stock policies. Their findings suggest that adding expired estimates can significantly improve performance compared to traditional base-stock policies that do not include such adjustments. Additionally, \cite{Gorria2022} examines supply chains for platelet concentrates, providing analytical approximations for key performance indicators such as expected on-hand inventory, order size, lost sales, and expired items. This research considers several complexities, including non-stationary demand with weekly variations and adjustable safety stock levels.

The \textit{asymptotic optimality} of policy classes has received increasing attention in recent studies. \cite{Huh2009} demonstrate that for non-perishable products in lost-sales settings, the base-stock (OUT) policy converges to optimality as penalty costs increase. For perishable products, \cite{Bu2023_4400} investigate the effectiveness of the OUT policy in lost-sales settings with no lead times and in backorder settings with positive lead times, establishing its asymptotic optimality as penalty or expiration costs grow. The PIL policy has primarily been studied in the context of asymptotic optimality. \cite{vanJaarsveld2024} analyze it for the standard lost-sales setting, providing results for long lead times and high penalty costs. Additionally, \cite{Bu2023_4638265} examine perishable products with positive lead times, demonstrating that the PIL policy is optimal for single-period product lifetimes under bounded demand. For scenarios with unbounded demand and high penalty costs, the PIL policy still achieves optimality. \cite{Goldberg2021} provides further insights into the concept of asymptotic optimality.

The main thrust of our paper is to evaluate classical policies and DRL in a pharmaceutical supply chain. Our \textit{adaptations} of the classical OUT and PIL policies for this purpose build on the reviewed literature as follows. First, in contrast to the EWA policy examined by \cite{Broekmeulen2009} and \cite{Gorria2022}, which approximates expired stock using the mean demand value, our adaptation of the PIL policy also considers lost sales and uses the true distribution of demand to estimate expired stock more accurately. Additionally, in the context of the OUT policy with lost sales, perishable products, and positive lead times, we define both lower and upper bounds, heuristically addressing cases of non-stationary demand not analyzed by \cite{Bu2023_4400}. While their approach is practical in settings with backlogged demand or zero lead times, our case study focuses on settings involving lost sales and positive lead times. Lastly, we propose a bounds-based procedure to optimize the parameter of the PIL policy in the BMS case study, building on the work of \cite{Bu2023_4638265} and establishing an effective procedure for accurately estimating the projected inventory level.

\subsection{Deep Reinforcement Learning}
\cite{Boute2022} provide a comprehensive roadmap detailing the potential improvements that DRL can offer to inventory systems and policies. \cite{Oroojlooyjadid2022} extend the deep Q-network (DQN) algorithm to address the beer game problem, enabling the DQN algorithm to learn a near-optimal policy while other entities follow a base-stock policy. \cite{Gijsbrechts2022} implement and fine-tune the asynchronous advantage actor-critic (A3C) algorithm for multi-echelon systems, finding that A3C performs comparably to state-of-the-art heuristics and approximate dynamic programming algorithms.

To the best of our knowledge, no studies have assessed the ability of DRL to solve problems that combine multiple factors such as non-stationary demand, yield uncertainty, and perishability--challenges commonly arising in real-world cases. Recently, \citep{vanderHaar2024} explored the use of DRL in industrial spare parts management. However, that case features neither perishability nor yield uncertainty. In the following, we briefly review DRL applications that incorporate at least one of these factors.

Non-stationarity is challenging to incorporate into classical policies, and DRL algorithms may offer significant advantages in such scenarios. \cite{Stranieri2024} evaluate the performance of DRL algorithms and classical policies in multi-echelon systems with stochastic and seasonal demand, showing that the PPO algorithm performs better than other policies in a wide range of experiments. Similarly, \cite{Dehaybe2024} demonstrate that DRL, when supported by demand forecasting, can effectively learn non-stationary policies in supply chains characterized by fixed costs, lead times, and the presence of both backorders and lost sales. Their findings reveal that DRL can match or even surpass the performance of dynamic programming-based heuristics in certain scenarios. \citet{vanHezewijk2024} propose scalable DRL algorithms for a lot-sizing problem involving multiple products, where individual items exhibit non-stationarity. Additionally, \citet{vanDijck2024} apply DRL to a complex capacitated assembly system, showing that DRL is especially valuable in handling non-stationary demand.

A notable research gap exists in applying DRL in the domain of perishable products with fixed lifetimes. \cite{temizoz2020deep} develop DRL algorithms tailored to highly stochastic problems arising in operations management, testing their solution approach on three benchmark problems, one of which involves perishable products. \cite{DeMoor2022} conduct a comparative analysis of the DQN algorithm against classical policies and other heuristics, consistently demonstrating that DQN outperforms alternatives across various experiments. Similarly, \cite{Ahmadi2022} investigate the use of DRL algorithms for perishable products, finding that these algorithms significantly reduce the likelihood of product stockouts and expirations, thereby enhancing service levels.

Our paper explores the application of DRL in inventory systems involving key factors such as perishability, yield uncertainty, and non-stationary demand--among others--that are critical in pharmaceutical supply chains and are rarely addressed together. Unlike prior studies, which typically examine these factors in isolation, we focus on integrating them into a realistic case study. By leveraging future projected inventory levels as features, our approach bridges theoretical insights and practical implementation, offering a novel method for managing complex inventory problems.

\section{Case Study Description} \label{sec:section3}
The pharmaceutical supply chain analyzed in this paper is modeled based on a \textit{real-world case study}. One of the authors collaborated with a senior supply chain manager at BMS to obtain a validated model of the inventory system and access a realistic demand-generating process for a specific perishable pharmaceutical drug. The supply chain manager also provided key operational parameter values, such as yield uncertainty, product lifetime, and lead time, as discussed in detail below. Using company data and expert input, we derived a validated set of parameters that produces a realistic model for assessing state-of-the-art inventory policies in the context of challenges typically arising in pharmaceutical supply chains. This section describes the base case, while Section \ref{sec:section5} explores variants of this case to gain deeper insights into the results.

The modeled supply chain environment is structured as a finite-horizon, periodic-review system over $T$ timesteps ($t = 1, \ldots, T$), directly linking a third-party factory with a storage warehouse, as represented in Figure \ref{fig:sc1}. We consider a single product type produced by the factory. Product batches may experience up to 10\% of items lost due to yield uncertainty during production. In our base case, BMS assumes that yield loss is uniformly distributed between $0$ and $10\%$ of each order. This assumption captures the inherent variability in production, enabling us to explore a broader range of yield rate values.

Supply chains commonly operate with planned lead times to provide the factory with sufficient time for production planning \citep[see, e.g.,][]{Tang2007}. At BMS, a positive lead time of $L = 12$ timesteps is assumed, with items delivered in batches of $Q = 20$ units and a maximum of $n = 6$ batches per shipment. The company employs a dynamic batch ordering cost $K(q_t)$, which varies with the batch size to account for economies of scale, along with a static unit ordering cost $\hat{c}$ proportional to the number of items ordered.

Other parameters are similarly based on discussions with expert human planners at BMS. Upon receipt at the warehouse, each item has a product lifetime of $m = 12$ timesteps, after which an expiration cost of $\hat{w} = 3.0$ per unit is incurred. The warehouse is assumed to have unlimited storage capacity, with a holding cost of $\hat{h} = 1.0$ per unit per timestep. A lost sales cost of $\hat{b} = 100$ per unit is applied as a penalty when demand exceeds on-hand inventory. Without loss of generality, we assume that any remaining items at the end of the horizon are salvaged at a value of $\hat{c}$ per unit.

\begin{figure}[ht!]
    \centering
        \includegraphics[scale=0.65]{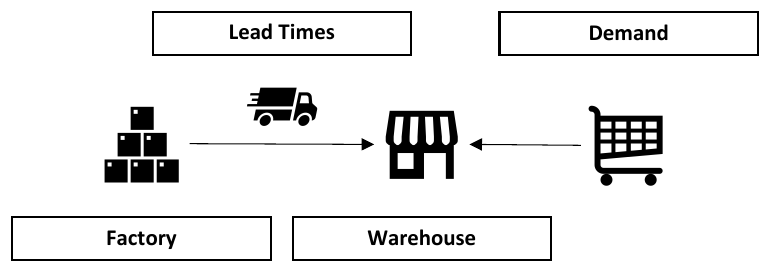}
        \caption{Representation of the supply chain environment.}
        \label{fig:sc1} 
\end{figure}

Demand across timesteps follows a non-stationary process. For each timestep $t$, let $D_t$ represent the one-period demand, with $\mathbb{E}[D_t] < +\infty$. The demand $D_t$ at each timestep $t$ is modeled as $D_t = d_t + \xi_t$, where $d_t$ represents deterministic values provided by BMS that define the demand forecast, and $\xi_t$ denotes random variables with a mean of zero, representing the forecast error. The non-stationarity arises because the standard deviation of $\xi_t$, denoted as $\sigma_t$, varies across timesteps. BMS provides forecast errors as percentages of the mean demand, which are used to compute the time-dependent standard deviation $\sigma_t$ for each timestep. The distribution of $\xi_t$ is not necessarily independent and identically distributed (i.i.d.), further capturing the non-stationarity of the forecast error.

\subsection{Order of Events}
As illustrated in Figure \ref{fig:dynamics}, the order of events in the supply chain environment and their associated costs for each timestep $t$ are defined as follows:
\begin{enumerate}
    \item The order $q_{t-L}$ placed at timestep $t-L$ arrives at the warehouse.
    \item A new order $q_{t}$ is placed at the factory, incurring ordering costs.
    \item Demand $D_t$ is satisfied from the on-hand inventory in the warehouse. Any unsatisfied demand is tracked as lost sales.
    \item Costs for timestep $t$--including ordering, lost sales, expiration, and holding costs--are computed. Expired items are removed from the on-hand inventory at the end of the timestep, incurring expiration costs. The remaining items with usable lifetimes are moved forward to the next timestep, resulting in holding costs.
\end{enumerate}

\begin{figure}[ht!]
    \centering
    \resizebox{10cm}{3cm}{ 
    \begin{tikzpicture}
    \draw[thick, black] (0,0) node[above=3pt, thick, black] {$t$} node[above=4pt] {};
    \draw[thick, black] (0,-0.1) -- (0,0.1);
    \draw[thick, ->, black] (-1,0) -- (11,0);
    \draw[thick, ->, black] (2,0.7) -- (2,0);
    \draw[thick, ->, black] (4,-0.7) -- (4,0);
    \draw[thick, ->, black] (6,0.7) -- (6,0);
    \draw[thick, ->, black] (8,-0.7) -- (8,0);
    \draw[thick, black] (2,0.7) node[above=2pt, align=center, thick] {order $q_{t-L}$ \\ arrives} node[above=3pt] {};
    \draw[thick, black] (4,-0.7) node[below=2pt, align=center, thick] {order $q_{t}$ \\ placed} node[above=3pt] {};
    \draw[thick, black] (6,0.7) node[above=2pt, align=center, thick] {demand $D_{t}$ \\ satisfied} node[above=3pt] {};
    \draw[thick, black] (8,-0.7) node[below=2pt, align=center, thick] {costs $C_{t}^{\pi}$\\ computed} node[above=3pt] {};
    \draw[thick, black] (10,0) node[above=3pt,thick, black] {$t+1$} node[above=3pt] {};
    \draw[thick, black] (10,-0.1) -- (10,0.1);
    \end{tikzpicture}
    }
    \caption{Order of events in the supply chain environment.}
    \label{fig:dynamics}
\end{figure}
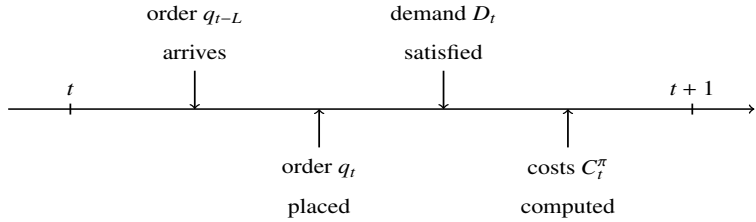

Let $\mathbf{x}_t := (x_{t,1}, x_{t,2}, \ldots, x_{t,m}, \ldots, x_{t,m+L-1})$ be the vector representing the inventory in transit and on hand, where $x_{t,i}$ denotes the quantity in transit or on hand with a remaining lifetime of $i$, for $i = 1, \ldots, m+L-1$. Let $Z_t$ be a random variable representing the fractional yield for the order arriving at timestep $t$ with realization $z_t$. The maximum production loss, denoted by $\hat{z}$, is set at 10\%, resulting in $Z_t = 1 - \mathcal{U}(0, 1) \times \hat{z}$, where $\mathcal{U}(0, 1)$ is a random variable uniformly distributed on the interval $[0, 1]$. This yield effectively captures the variability in the drug production process, ensuring it fluctuates between 90\% and 100\%, as indicated by BMS. Consequently, the order arriving at timestep $t+1$, denoted by $x_{t+1, m}$, is updated based on the yield rate $Z_t$ and the previous order $x_{t, m+1}$.

The full \textit{inventory vector} $\mathbf{x}_t$ is then updated as follows:
\begin{equation}\label{Eq:Dynamic}
x_{t+1,i} =
\begin{cases}
q_t, \, i = m+L-1, \, x_{t,i+1}, \, i = m+1, \ldots, m+L-2, \\
\left(Z_t x_{t,i+1} - \left(D_t - \sum_{j=1}^{i} x_{t,j}\right)^{+}\right)^{+}, \, i = m, \\
\left(x_{t,i+1} - \left(D_t - \sum_{j=1}^{i} x_{t,j}\right)^{+}\right)^{+}, \, i = 1, \ldots, m-1.
\end{cases}
\end{equation}
This update implies that for the inventory at position $i = m+L-1$, the vector is updated according to the current order quantity $q_t$ placed at timestep $t$. For intermediate positions, from $i = m+1$ to $i = m+L-2$, representing the pipeline of inventory in transit, the vector is shifted by one position. For positions from $i = 1$ to $i = m$, the vector is updated to satisfy the demand $D_t$ using the on-hand inventory, following a FIFO issuing policy from the oldest to the most recent items. Inventory with a lifetime of $i = m$ (i.e., items that have just arrived) directly incorporates the yield rate $Z_t$. Here, $(\cdot)^{+}$ denotes $\max(0, \cdot)$.

Let $X_t$ and $Y_t$ denote the random variables representing the total inventory level at timestep $t$ before order arrival and after order arrival but before demand satisfaction, respectively. Their realizations are defined as $x_t = \sum_{i=1}^{m-1} x_{t,i}$ and $y_t = x_t + z_t q_{t-L}$, and the expired quantity at timestep $t$ is given by $O_t = (x_{t,1} - D_t)^+$.

The \textit{total cost} over a finite episode horizon $T$ for a given policy $\pi \in \Pi$, where a policy maps states to actions and $\Pi$ represents the set of feasible policies, can be expressed as:
\begin{equation}\label{Eq:tot cost}
C^{\pi} = \mathbb{E}\left[ \sum_{t=1}^{T} C_{t}^{\pi} - \hat{c} \sum_{i=1}^{m-1} x_{T+1,i} \right],
\end{equation}
where $C_{t}^{\pi} = K(q_t) + \hat{c} q_t + \hat{h} (Y_{t} - D_t)^{+} + \hat{b} (D_t - Y_{t})^{+} + \hat{w} O_t$ and $K(q_t) = \{0, 5, 8, 9, 10\}$, for $n_t = 0, 1, 2, 3, \{4, 5, 6\}$, respectively, with $n_t = \lceil \frac{q_t}{Q} \rceil$ and $\lceil \cdot \rceil$ represents the ceiling function.
The first term in $C_{t}^{\pi}$ represents the batch ordering cost, conditional on the batch size, while the second term is the unit ordering cost proportional to the number of items. The subsequent terms denote the holding, lost sales, and expiration costs at timestep $t$, respectively.

Given our objective to determine an appropriate order quantity $q_t$ at each timestep $t$, while minimizing the total cost $C^{\pi}$ across the episode horizon $T$, we assume $q_t = 0$ for $t = T - L + 1, \ldots, T$, as these orders will not arrive before the episode ends. Additionally, at the start of each episode, the inventory level is assumed to be empty, i.e., $x_{1,i} = 0$ for $i = 1, \ldots, L - 1$. Table \ref{tab:notation} summarizes the notation used throughout the paper.

\begin{table}[ht!]
\centering
\footnotesize
\caption{Notation for the supply chain environment with their explanations (and units of measure).}
\label{tab:notation}
\renewcommand{\arraystretch}{0.5} 
\begin{tabular}{llll}
\toprule
Parameter & Explanation & Parameter & Explanation \\ \midrule
$K(q_t)$ & Batch Ordering Cost (per batch) & $T$ & Episode Horizon (timesteps) \\
$\hat{c}$ & Unit Ordering Cost (per unit) & $L$ & Lead Time (timesteps) \\
$\hat{w}$ & Expiration Cost (per unit) & $m$ & Product Lifetime (timesteps) \\
$\hat{b}$ & Lost Sales Cost (per unit per timestep) & $\hat{z}$ & Production Yield (percentage per batch) \\
$\hat{h}$ & Holding Cost (per unit per timestep) & $D_{t}$ & Demand (units) \\
\bottomrule
\end{tabular}
\end{table}

\subsection{Cost Transformation}
Following a methodology similar to that described in \cite{Chao2015}, we transform the total cost $C^{\pi}$ into an \textit{equivalent expression} that excludes the unit ordering cost $\hat{c}$. The following lemma facilitates this transformation.

\begin{lemma}\label{Lemma1}
The total cost defined in Equation \ref{Eq:tot cost} can be expressed as:
\begin{equation}\label{Eq:cost trans}
C^\pi = \mathbb{E}\left[\sum_{t=1}^{T} K(q_t) + h(Y_{t} - D_t)^{+} + b(D_t - Y_{t})^{+} + wO_t\right] + \hat{c}\sum_{t=L+1}^{T}  \mathbb{E}[D_{t}],
\end{equation}
where $h = \hat{h}$, $b = \hat{b} - \hat{c}$, and $w = \hat{w} + \hat{c}$.
\end{lemma}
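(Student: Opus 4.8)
The plan is to prove Lemma~\ref{Lemma1} by an inventory mass-balance (conservation) argument combined with telescoping over the horizon, which is the mechanism underlying the transformation of \cite{Chao2015} that is cited here. The goal is to show that the raw unit-ordering cost $\hat c q_t$, together with the terminal salvage credit $-\hat c\sum_{i=1}^{m-1}x_{T+1,i}$, can be re-expressed entirely in terms of satisfied demand, lost sales, and expiration, so that the per-unit cost $\hat c$ is \emph{redistributed} onto the remaining cost components, producing the shifted coefficients $b=\hat b-\hat c$ and $w=\hat w+\hat c$ and the residual demand term.

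First I would establish the one-period balance implied by the dynamics in Equation~\ref{Eq:Dynamic}. Writing $X_t=\sum_{i=1}^{m-1}x_{t,i}$ for on-hand stock before arrival and $Y_t=X_t+z_tq_{t-L}$ for stock after arrival but before demand, the FIFO update yields, for the usable stock carried into the next period, $X_{t+1}=(Y_t-D_t)^+-O_t$. Equivalently, the quantity arriving in period $t$ equals what is sold, what expires, and the net change in stock: $z_tq_{t-L}=[D_t-(D_t-Y_t)^+]+O_t+(X_{t+1}-X_t)$, where I used $\min(D_t,Y_t)=D_t-(D_t-Y_t)^+$ for the sold amount.

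Next I would sum this identity over $t=1,\dots,T$ and telescope the stock differences. Using the empty-inventory initialization ($X_1=0$) and recognizing $X_{T+1}=\sum_{i=1}^{m-1}x_{T+1,i}$ as exactly the salvaged quantity, the telescoping collapses the running-stock terms and leaves a single relation tying $\sum_t q_t$ (the ordered quantity, reindexed through the lead-time shift $q_{t-L}$) to $\sum_t D_t$, $\sum_t(D_t-Y_t)^+$, $\sum_tO_t$, and $X_{T+1}$. Substituting this relation into $\sum_t\hat c q_t-\hat c X_{T+1}$ then produces three pieces: a demand term $\hat c\sum_t\mathbb E[D_t]$; a lost-sales term $-\hat c\sum_t(D_t-Y_t)^+$ that merges with $\hat b(D_t-Y_t)^+$ to give the coefficient $b=\hat b-\hat c$; and an expiration term $+\hat c\sum_tO_t$ that merges with $\hat wO_t$ to give $w=\hat w+\hat c$, while the salvage credit cancels the terminal-stock contribution. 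Taking expectations and noting that $K(q_t)$ and $\hat h(Y_t-D_t)^+$ pass through unchanged then yields Equation~\ref{Eq:cost trans}.

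The main obstacle I anticipate is the careful bookkeeping of the boundary terms rather than the algebra. Two points need attention: (i) the first $L$ periods, during which no ordered stock has yet arrived, so that the demand sum must be truncated to begin at $t=L+1$ (matching $\hat c\sum_{t=L+1}^T\mathbb E[D_t]$)---I would verify that the zero arrivals and forced lost sales in these periods are consistent with this truncation; and (ii) the yield factor $Z_t$, since the balance identity naturally carries the \emph{received} quantity $z_tq_{t-L}$ rather than the \emph{ordered} quantity $q_t$ appearing in the cost. I would reconcile these using the lead-time reindexing $q_t\leftrightarrow q_{t-L}$ together with the fact that each order is placed before its yield is realized, so that $Z_t$ is independent of $q_{t-L}$, and, where required, the modeling convention governing arrivals in the first $L$ periods. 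Getting these endpoint and yield adjustments to combine exactly into the clean term $\hat c\sum_{t=L+1}^T\mathbb E[D_t]$ is the delicate step; the interior telescoping and the coefficient redistribution are routine once the balance identity is in hand.
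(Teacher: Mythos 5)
Your mechanism is the same one the paper uses: its appendix proof is exactly a mass-balance-plus-telescoping argument, in which $\hat c q_t$ is rewritten through the inventory recursion (the paper's displayed relations are your one-period balance, organized per period) and three telescoping identities collapse the interior terms using $q_t=0$ for $t>T-L$ and the empty initial pipeline. The coefficient redistribution $b=\hat b-\hat c$, $w=\hat w+\hat c$ and the cancellation of the salvage credit against the terminal stock go through exactly as you describe.

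The gap is that the two points you defer to ``careful bookkeeping'' are precisely where the exact equality fails, and your proposed repairs do not close them. (a) Yield: the balance identity carries the received quantity $Z_{t+L}q_t$ while the cost charges $\hat c$ per ordered unit $q_t$, so after telescoping a residual $\hat c\,\mathbb{E}\bigl[\sum_{t=1}^{T-L}(1-Z_{t+L})q_t\bigr]$ survives. Independence of $Z_{t+L}$ and $q_t$ does not make this vanish; it evaluates it: $\hat c\,\mathbb{E}[1-Z_{t+L}]\sum_{t}\mathbb{E}[q_t]=\tfrac{\hat z}{2}\,\hat c\sum_{t}\mathbb{E}[q_t]$, which is strictly positive and policy-dependent whenever orders are placed --- in effect a leftover per-unit ordering cost that the transformation fails to absorb, so even the claimed equivalence of the two optimization problems breaks. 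The identity as stated needs $Z_t\equiv 1$; this is what the paper's own proof tacitly assumes, since its balance relations contain no yield factor at all. (b) First $L$ periods: carrying out the verification you promise shows the truncation is \emph{not} consistent. Demand in periods $t\le L$ is necessarily lost (empty initial inventory, first arrival at $t=L+1$) and is charged $\hat b D_t$ in the original cost but $(\hat b-\hat c)D_t$ in the transformed one, while the compensating constant starts only at $t=L+1$; the telescoping actually produces $\hat c\sum_{t=1}^{T}\mathbb{E}[D_t]$, so the stated formula is off by the constant $\hat c\sum_{t=1}^{L}\mathbb{E}[D_t]$ unless $\mathbb{E}[D_t]=0$ in those periods (harmless for finding the optimal policy, but it falsifies the equality as written). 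To complete your proof, state these as explicit assumptions --- no yield loss for this lemma (consistent with how the paper excludes yield from its bound derivations) and either zero demand before $t=L+1$ or the constant rewritten as $\hat c\sum_{t=1}^{T}\mathbb{E}[D_t]$ --- rather than expecting the reindexing to resolve them; the paper's proof is silent on both points, so making them explicit would in fact tighten it.
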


\begin{proof}
The proof is provided in \ref{appendix:1}.
\end{proof}

From Lemma \ref{Lemma1}, it follows that finding an optimal policy $\pi$ minimizing $C^\pi$ is equivalent to finding the same optimal policy for $C^\pi - \hat{c} \sum_{t=L+1}^{T} \mathbb{E}[D_{t}]$. Consequently, the optimal policy can be defined as $OPT = \inf_{\pi \in \Pi} C^\pi$. In the subsequent sections, we focus on optimizing the transformed cost $C^\pi$.

\subsection{Dynamic Programming Formulation}
Under the lost sales assumption, determining the optimal order quantity $q_t$ using dynamic programming depends on the inventory vector $\mathbf{x}_t$. Let $V_t(\mathbf{x}_t)$ denote the cost-to-go function from timestep $t$ to $T$:
\begin{equation}\label{eq:dp}
V_{t}(\mathbf{x}_t) = \min \mathbb{E}\left[ K(q_t) + h(Y_{t} - D_t)^{+} + b(D_t - Y_{t})^{+} + wO_t + V_{t+1}(\mathbf{x}_{t+1}) \right].
\end{equation}

Solving this dynamic programming model poses significant challenges due to the \textit{curse of dimensionality}, primarily caused by the positive lead time $L$ and the product lifetime $m$. For instance, \cite{Ding2023} demonstrated that even with a lead time of zero, determining the optimal order quantity for products with a lifetime of at least five timesteps requires substantial computational effort. Introducing a lead time of 12 timesteps, combined with a product lifetime of 12 timesteps--as in our real-world case study--further exacerbates the complexity, making it computationally intractable to find an exact solution within a reasonable timeframe. This challenge persists even for non-perishable products, as solving Equation \ref{eq:dp} is further complicated by the lost sales assumption, as highlighted in prior studies \citep{Levi2008,arts2015base}.

As expressed in Lemma \ref{Lemma1}, the unit ordering cost $\hat{c}$ is incorporated into the holding, penalty, and expiration costs, allowing us to focus on solving the transformed total cost $C^\pi$. However, the batch ordering cost represented by $K(q_t)$ introduces additional complexities. Developing effective policies for perishable products becomes particularly difficult when such costs are present \citep{Dehaybe2024}. As a result, the proposed policies must be carefully evaluated to ensure their effectiveness in addressing these challenges.

\section{Inventory Policies} \label{sec:section4}
This section analyzes two primary \textit{types of inventory policies} implemented under the specific assumptions and challenges of our supply chain environment: classical (analytic) policies and DRL policies. For the classical policies, we define the widely known and adopted OUT policy (Section \ref{sec:bs}) and describe the baseline policy currently used at BMS, which refines the OUT policy by incorporating expected expiration in the spirit of the EWA policy (Section \ref{sec:bmspol}). Additionally, we discuss the PIL policy applied in our case study (Section \ref{sec:pil}). Lastly, for the DRL policy, we introduce the PPO algorithm (Section \ref{sec:ppo}). Our analysis outlines the fundamental principles of each policy type, investigates specific theoretical properties, and discusses their practical implementation.

\subsection{OUT Policy} \label{sec:bs}
To implement the OUT policy in our non-stationary case, we propose fixing the safety stock instead of the order-up-to level. This approach ensures that the order-up-to level adjusts dynamically based on the expected demand during lead times, allowing it to increase or decrease in response to fluctuations in demand forecasts. Specifically, the proposed OUT policy is characterized by a single parameter, $s$, representing the \textit{safety stock}. Denoted by $\pi_s$, this policy involves ordering to the \textit{order-up-to level} $S_t=s + \sum_{j=t}^{t+L} d_j$ at each timestep $t$. Accordingly, the order quantity $q_t^{\pi_s}$ at timestep $t$ is calculated as:
\begin{equation}
q_{t}^{\pi_s}=\left(S_t-\sum_{i=1}^{m+L-1}x_{t,i}\right)^+,
\end{equation}
and the number of batches is determined as: $n_t^{\pi_s} = \left\lceil \frac{q_t^{\pi_s}}{Q}\right\rceil$. 

The expected total cost over the episode horizon $T$ is given by:
\begin{equation}
C^{\pi_s}=\mathbb{E}\left[\sum_{t=1}^{T}(h(Y_t-D_t)^{+}+b(D_t-Y_t)^{+}+wO_t)\right].
\end{equation}
This formulation implies that the order-up-to level equals $s + \sum_{j=t}^{t+L} d_j$, which may vary dynamically at each timestep $t$ to account for non-stationary demand. Here, $\sum_{i=1}^{m+L-1} x_{t,i}$ represents the inventory level at timestep $t$ prior to ordering. Given the positive lead time $L$, the expected total cost can also be expressed as:
\begin{equation}
C^{\pi_s} = \sum_{t=1}^{T-L} \left(h \mathbb{E}\left[\left(Y_{t+L}-D_{t+L}\right)^+\right] + b \mathbb{E}\left[\left(D_{t+L}-Y_{t+L}\right)^+\right] + w \mathbb{E}\left[\left(x_{t+L,1}-D_{t+L}\right)^+\right]\right).
\end{equation}

Our goal is to determine the optimal safety stock $s$ for the OUT policy under lost sales. To achieve this, we propose a bounds-based search procedure that leverages approximative assumptions and simplifications to establish bounds and then employs a bounds-based procedure to optimize policy parameters. Specifically, we assume that the random variables representing forecast error, $\xi_t$, are i.i.d. and follow a normal distribution with a mean of zero and a standard deviation $\sigma$. Under this assumption, the cumulative forecast error over the lead time is defined as $\mathcal{D}_{L+1} = \sum_{j=t}^{t+L} \xi_j$. Two simplifications are also made to facilitate the derivation of bounds. First, batch ordering costs, $K(q_t)$, are omitted. Prior studies, such as \cite{Nahmias1978} and \cite{Zhang2016}, analyzed fixed ordering costs in perishable systems with fixed lifetimes but limited their focus to scenarios with zero lead time. In pharmaceutical supply chains, transportation costs are typically shared across multiple products, making fixed ordering costs negligible. Second, yield uncertainty is excluded from deriving theoretical bounds to maintain tractability. Given the complexities of deriving closed-form expressions for costs, we adopt \textit{Monte Carlo simulation} to determine the optimal safety stock for the OUT policy. In particular, we generate 2000 simulated episodes, each covering the entire episode horizon, and set the safety stock to minimize the average total cost across these simulated episodes. The effectiveness of this bounds-based procedure will be validated in Section \ref{sec:section5}.

\subsubsection{OUT Policy Lower Bound}
The following lemma establishes a lower bound for the cost component of the OUT policy.

\begin{lemma}\label{Lemma3}
Under the OUT policy with lost sales, for each demand sample path and every timestep $t \geq L+1$, the following conditions hold:
 \begin{equation}
\begin{cases}
\mathbb{E}\left[O_t\right]\geq \frac{\mathbb{E}\left[\left(s-\mathcal{D}_{m+L}\right)^+\right]}{m+L}\\
\mathbb{E}\left[\left(\sum _{i=1}^m x_{t,i}-D_t\right)^+\right]\geq\mathbb{E}\left[\left(s-\mathcal{D}_{L+1}\right)^+\right]-\frac{L\mathbb{E}\left[\left(s-\mathcal{D}_{m+L}\right)^+\right]}{m+L}  \\
\mathbb{E}\left[\left(D_t-\sum _{i=1}^m x_{t,i}\right)^+\right]\geq\frac{\mathbb{E}\left[\left(\mathcal{D}_{L+1}-s\right)^+\right]}{L+1}
\end{cases}.
 \end{equation}
\end{lemma}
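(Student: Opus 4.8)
The plan is to argue pathwise in the realized demand and then take expectations, working under the simplifications already introduced for the bounds: unit yield ($Z_t \equiv 1$), no batch cost $K(q_t)$, i.i.d.\ normal forecast errors, and the approximation that the order-up-to level is attained every period, so that the inventory position after ordering equals $S_t$. The backbone is the identity $S_{t-L} - \sum_{j=t-L}^{t} D_j = s - \mathcal{D}_{L+1}$, which expresses the net position reached at $t$ (the order placed at $t-L$ arrives exactly at $t$) purely in terms of $s$ and the cumulative forecast error; an analogous identity over the full residence window of length $m+L$ will drive the expiration term. The three inequalities are then per-period lower bounds on, respectively, expected expiration $\mathbb{E}[O_t]$, expected leftover $\mathbb{E}[(Y_t-D_t)^+]$, and expected lost sales $\mathbb{E}[(D_t-Y_t)^+]$.

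I would prove the lost-sales inequality first, as it is the most transparent. Because of the lead time $L$, no order placed after $t-L$ can arrive in time to serve demand during the window $[t-L,t]$; hence the total quantity available to meet that window's demand is at most the inventory position at $t-L$, namely $S_{t-L}$ (expiry within this cohort can only lower it further). Summing the per-period lost sales over the window therefore gives, on every sample path, $\sum_{j=t-L}^{t}(D_j-Y_j)^+ \ge \left(\sum_{j=t-L}^{t}D_j - S_{t-L}\right)^+ = (\mathcal{D}_{L+1}-s)^+$. Taking expectations and attributing the total equally across the $L+1$ periods of the window—exact in the stationary steady-state reduction and used as the heuristic in the non-stationary case—yields the stated $\tfrac{1}{L+1}\mathbb{E}[(\mathcal{D}_{L+1}-s)^+]$.

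For the expiration bound I would track the cohort of $S_\tau$ units positioned at time $\tau$. Every such unit must leave the system (sold or expired) within its maximal residence window $[\tau,\tau+m+L-1]$, namely $L$ periods in transit plus $m$ on the shelf, and under FIFO this cohort has priority over all later orders. Bounding the quantity of the cohort that can be sold by the demand realized over the window produces a pathwise lower bound on the cohort's expiry, hence on $\sum_j O_j$ over the window, and averaging over its $m+L$ periods gives the $\tfrac{1}{m+L}$ factor. The hard part lives here: the naive bound (cohort sold $\le$ total window demand) leaves a residual forecast-mean term and is too weak to reach $(s-\mathcal{D}_{m+L})^+$. The clean form requires exploiting (i) that re-ordering up to level each period continually replenishes the forecasted demand, so that only the safety-stock buffer $s$ plus accumulated forecast error is genuinely at risk, and (ii) that the cohort's units have staggered remaining lifetimes and so cannot all wait for late-window demand. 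Reconciling the FIFO/perishability/lost-sales interaction with these two effects—so that the forecast means cancel and the cumulative expiry collapses exactly to $(s-\mathcal{D}_{m+L})^+$—is the main obstacle, and it is precisely where the approximative assumptions do the essential work.

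Finally I would obtain the holding (leftover) inequality by combining the first two ideas. In the no-expiry idealization the leftover after demand equals $(Y_t-D_t)^+ = (s-\mathcal{D}_{L+1})^+$ via the backbone identity. Expiration in the real system removes on-hand units that would otherwise have survived as leftover; charging these removals against the idealized leftover over the $L$ pipeline periods and invoking the expiration bound just established accounts for the correction $-\tfrac{L}{m+L}\mathbb{E}[(s-\mathcal{D}_{m+L})^+]$, giving the stated lower bound. Throughout, taking expectations at the end and the per-window averaging are the only non-pathwise steps, and—consistent with the paper's framing of these as approximate, simulation-validated bounds—they are exact under the stationary reduction and heuristic otherwise.
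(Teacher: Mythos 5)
Your proposal follows essentially the same route as the paper's proof (given in \ref{appendix:3}): pathwise material-conservation identities under the OUT policy, taken over the lead-time window of length $L+1$ for the lost-sales and leftover terms and over the full residence window of length $m+L$ for the expiry term, dropping the lost-sales contributions that only help, moving prior expirations to the other side, and then taking expectations and spreading each window total uniformly over its periods via the i.i.d.\ stationarity simplification. Your lost-sales argument is exactly the paper's: its inequality $\bigl(D_t-\sum_{i=1}^m x_{t,i}\bigr)^+ \geq \bigl(\sum_{j=t-L}^t \xi_j - s\bigr)^+ - \sum_{j=t-L}^{t-1} l_j$ rearranges to your cohort statement $\sum_{j=t-L}^{t} l_j \geq (\mathcal{D}_{L+1}-s)^+$, and your leftover argument matches the paper's step $\bigl(\sum_{i=1}^m x_{t,i}-D_t\bigr)^+ \geq \bigl(s-\mathcal{D}_{L+1}\bigr)^+ - \sum_{j=t-L}^{t-1} O_j$.

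The one place you stopped short---the residual forecast-mean term in the expiration bound---deserves comment, because the paper does not overcome it either. The paper's proof simply writes $O_t = \bigl(s - \sum_{j=t-m-L+1}^{t} D_j + \sum l_j - \sum O_j\bigr)^+$, i.e., it takes the net position of the expiring cohort to be $s$ minus cumulative window demand; since the order-up-to level at $t-m-L+1$ only covers forecasted demand over $L+1$ of the $m+L$ window periods, the exact identity carries the residual $-\sum_{j=t-m+2}^{t} d_j$ that you identified, and the paper drops it without remark. Tellingly, in the PIL analysis the paper keeps precisely this residual (the term $-\sum_{j=t+1}^{t+m-1} d_j$ in Equation \ref{PIL-11}), so your hesitation is well-founded: the clean form $\mathbb{E}[(s-\mathcal{D}_{m+L})^+]/(m+L)$ is part of the ``approximative assumptions'' the paper invokes for its bounds-based procedure, not something derivable from FIFO/staggered-lifetime effects, and your proof is no less rigorous than the paper's---only more candid about where the heuristic enters. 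One further looseness you share with the paper: in the leftover bound, replacing $\sum_{j=t-L}^{t-1}\mathbb{E}[O_j]$ by $L\,\mathbb{E}[(s-\mathcal{D}_{m+L})^+]/(m+L)$ after subtraction requires an \emph{upper} bound on expected expiry, whereas the first inequality supplies a \emph{lower} bound; both arguments implicitly treat the steady-state relation as an equality (and the paper's appendix even displays this inequality with a $+$ sign, contradicting the $-$ sign in the lemma statement).
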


\begin{proof}
The proof is provided in \ref{appendix:3}.
\end{proof}

This lemma enables us to derive a lower bound, denoted by $LB$, on the total cost:
\begin{equation}
C^{\pi_s} \geq LB(s) = T \left(h \mathbb{E}\left[\left(s-\mathcal{D}_{L+1}\right)^+\right] + \frac{b}{L+1} \mathbb{E}\left[\left(\mathcal{D}_{L+1}-s\right)^+\right] + \frac{w-hL}{m+L} \mathbb{E}\left[\left(s-\mathcal{D}_{m+L}\right)^+\right] \right).
\end{equation}

\subsubsection{OUT Policy Upper Bound}
We next derive an upper bound, $UB$, under the assumption that items are non-perishable ($O_t = 0$ for $m = +\infty$ and any $t > L$):
\begin{equation}
\begin{aligned}
\left(\sum_{i=1}^m x_{t,i}-D_t\right)^+ &\leq \left(\sum_{j=t-L}^{t-1} l_j\right) + \left(s-\sum_{j=t-L}^t \xi_j\right)^+, \\
\left(D_t-\sum_{i=1}^m x_{t,i}\right)^+ &\leq \left(\sum_{j=t-L}^t \xi_j-s\right)^+.
\end{aligned}
\end{equation}

Taking the expectation of the above inequalities yields the upper bound:
\begin{equation} \label{eq:ub_out}
UB(s)=T\left(h \mathbb{E}\left[\left(s-\mathcal{D}_{L+1}\right)^+\right]+(b+hL)\mathbb{E}\left[\left(\mathcal{D}_{L+1}-s\right)^+\right]\right).
\end{equation}
This bound follows directly from the assumption that $\xi_t$ is stationary and i.i.d., which implies that $\sum_{j=t-L}^t \xi_j$ and $\sum_{j=t}^{t+L} \xi_j$ share the same distribution.

We proceed by determining the solution for the upper bound, denoted as $s^*_{UB}$, which corresponds to the $\frac{b+hL}{b+(L+1)h}$ quantile of the forecast error distribution:
\begin{equation}
 s^*_{UB}= \inf\left\{s: \mathbb{P}(\mathcal{D}_{L+1}\leq s)\geq \frac{b+hL}{b+(L+1)h}\right\}.
\end{equation}
The upper bound is expected to act as a cap on the order-up-to level of the optimal perishable OUT policy under lost sales. This expectation is based on the formal demonstration provided by \cite{Nandakumar1993} for scenarios with zero lead time and further supported by findings for positive lead times in non-perishable systems by \cite{Huh2009}. However, providing formal proof of this conjecture in the context of perishable products remains a challenging open problem.

Finally, given the complexities of deriving closed-form expressions for costs, we adopt \textit{Monte Carlo simulation} to determine the optimal safety stock for the OUT policy. Specifically, we generate 2000 simulated episodes, each covering the entire episode horizon, and set the safety stock to minimize the average total cost across these simulated episodes.

\subsection{BMS Baseline Policy} \label{sec:bmspol}
We next discuss the BMS baseline policy, which is derived from the expertise of human planners. Similar to the OUT policy, this human-driven policy relies on a time-dependent target level, defined as:
\begin{equation}\label{BMS-1}
    \tilde{S}_t = \tilde{s} + \sum_{j=t}^{t+L} d_{j},
\end{equation}
where $\tilde{s}$ represents the safety stock.

Accordingly, the BMS baseline orders the quantity $n_{t}^{\pi_{\tilde{s}}} = \left\lceil\frac{q_t^{\pi_{\tilde{s}}}}{Q}\right\rceil$ at each timestep $t$, where:
\begin{equation}\label{PIL-3}
    q_t^{\pi_{\tilde{s}}} = \left(\tilde{S}_t - \left(\sum_{i=1}^{m+L-1}x_{t,i} - \sum_{j=t}^{t+L-1} \hat{O}_{j} |\mathbf{x}_t\right)\right)^+.
\end{equation}
This expression does not account for lost sales occurring during the $L$ timesteps and requires the computation of $\sum_{j=t}^{t+L-1} \hat{O}_{j}$, which represents a \textit{heuristic estimate of the expired quantity} from timestep $t$ to $t+L-1$, based on the assumption that demand equals its expected value.

In particular, let $\hat{O}_{[t:s]}$ denote the cumulative estimated expired quantity from timestep $t$ to $s \geq t$. According to Equation 3 in \cite{Chao2018}, this can be computed as:
\begin{equation}\label{PIL-4}
    \hat{O}_{[t:s]}(\mathbf{x}_t)=\sum_{j=t}^{s}\hat{O}_{j}= \max\left\{\sum_{i=1}^{s-t+1}x_{t,i}-d_{[t:s]}, \hat{O}_{[t:s-1]}(\mathbf{x}_t)\right\},
\end{equation}
with the convention that $\hat{O}_{[t:-1]}(\mathbf{x}_t)\equiv0$. This approach is related to the EWA policy proposed by \cite{Broekmeulen2009}, but BMS applies a specific procedure for determining the safety stock.

\subsubsection{BMS Baseline Policy Safety Stock}
The safety stock for the BMS baseline policy is determined using a standard safety stock factor $k_1$ to achieve a 99\% service level, assuming the forecast error $\xi_t$ follows a normal distribution. Additionally, BMS empirically introduces a second safety stock factor $k_2$ to address the uncertainty in production yield. Based on the mean squared error (MSE) dynamically calculated at each timestep $t$, the safety stock is derived as:
\begin{equation}\label{PIL-2}
    \tilde{s} = k_1\sqrt{(L+1) \cdot MSE} + k_2.
\end{equation}

Note that the MSE, and consequently the safety stock $\tilde{s}$, depends on the quality of the demand forecast. Unlike the OUT policy, which determines the safety stock through cost-based optimization, the BMS baseline employs an internal \textit{fixed rule} for safety stock determination.

\subsection{PIL Policy} \label{sec:pil}
The foundation of the PIL policy has been explored by \cite{vanJaarsveld2024} and further developed by \cite{Bu2023_4638265} to address perishable products with fixed lifetimes, demonstrating its effectiveness compared to constant-order and OUT policies. However, neither study has examined its application under non-stationary demand. This paper extends the analysis of \cite{Bu2023_4638265} to investigate such scenarios. In the original formulation of the PIL policy, \cite{vanJaarsveld2024} propose placing an order to raise the \textit{projected inventory level} to a desired target. In our extension, we adopt a slightly modified approach to account for the complexities introduced by non-stationary demand.

Furthermore, both the BMS baseline and EWA policies account for expirations during lead times when determining the size of orders to be placed but ignore lost sales in the computation of Equation \ref{PIL-3}, treating excess demand as backorders. These policies also assume that forecast errors follow a normal distribution and rely solely on mean demand to estimate expirations, disregarding any associated uncertainty. To address these limitations, we propose incorporating the expected inventory level at timestep $t+L$ into the PIL policy.

Let $x_{t+L} = \sum_{i=1}^{m-1} x_{t+L,i}$ denote the inventory level before ordering at timestep $t+L$, and let $\pi_{u}$ represent the PIL policy, which involves ordering at each timestep $t$ up to:
\begin{equation}\label{PIL-1}
    U_t = u + d_{t+L},
\end{equation}
where $u$ represents the safety stock employed by the PIL policy. By the \textit{material conservation law}, the inventory level can be expressed as:
\begin{equation}\label{PIL-5}
x_{t+L}=\sum_{i=1}^{m-1}x_{t+L,i}=\sum_{i=1}^{m+L-1}x_{t,i}-\sum_{j=t}^{t+L-1}D_{j}+\sum_{j=t}^{t+L-1}l_{j}-\sum_{j=t}^{t+L-1}O_{j},
\end{equation}
where $l_{t}=(D_t-\sum_{i=1}^{m}x_{t,i})^+$.

According to Lemma 1 of \cite{Bu2023_4638265}, it is always possible to order a quantity that brings the expected inventory level at timestep $t+L$ to $U_{t}$, denoted as $\mathbb{E}\left[\sum_{i=1}^m x_{t+L,i}\right] = U_{t}$. By Equation \ref{PIL-5}, the order quantity can be expressed as:
\begin{equation}\label{PIL-6}
q_{t}^{\pi_u} = \left(U_{t}-\mathbb{E}\left[\sum _{i=1}^{m-1} x_{t+L,i}|\mathbf{x}_t\right]\right)^+ = \left(u+\sum_{j=t}^{t+L}d_{j}-\sum_{i=1}^{m+L-1}x_{t,i}+\mathbb{E}\left[\sum_{j=t}^{t+L-1}O_{j}-l_{j}\right]\right)^+.
\end{equation}

Therefore, the optimal order quantity, accounting for cumulative lost sales over $L$ timesteps, is given by:
\begin{equation}\label{PIL-8}
n_t^{\pi_u}=\left\lceil\frac{q_{t}^{\pi_u}}{Q}\right\rceil.
\end{equation}

In the following subsections, we derive the lower and upper bounds for the total cost associated with the PIL policy.

\subsubsection{PIL Policy Lower Bound}
For any feasible policy $\pi$, the expired quantity at timestep $t+m-1$ can be expressed as:
\begin{equation}\label{PIL-9}
O_{t+m-1} = \left(\sum_{i=1}^{m+L}x_{t-L,i} - \sum_{j=t-L}^{t+m-1} D_j + \sum_{j=t-L}^{t+m-1} l_j - \sum_{j=t-L}^{t+m-2} O_j\right)^+ \geq \left(\sum_{i=1}^{m+L}x_{t-L,i} - \sum_{j=t-L}^{t+m-1} D_j\right)^+ - \sum_{j=t-L}^{t+m-2} O_j,
\end{equation}
where $\sum_{i=1}^{m+L}x_{t-L,i}=s+\sum _{j=t-L}^{t} d_{j}$ for the OUT policy with an order-up-to level $s+\sum _{j=t-L}^{t} d_{j}$.

Applying the equality in Equation \ref{PIL-9} to the OUT policy gives:
\begin{equation}
\sum_{j=t-L}^{t+m-1} O_j \geq \left(s - \sum_{j=t-L}^{t+m-1} \xi_j - \sum_{j=t+1}^{t+m-1} d_j\right)^+ \geq \left(s - \sum_{j=t-L}^{t+m-1} \xi_j\right)^+ - \sum_{j=t+1}^{t+m-1} d_j.
\end{equation}

This expression can also be derived as:
\begin{equation}\label{PIL-10}
O_{[t,t+m+L-1]}(\mathbf{x}_t) = \sum_{j=t}^{t+m+L-1}O_{j}=\max\left\{\sum_{i=1}^{m+L}x_{t,i}-D_{[t,t+m+L-1]}, O_{[t,t+m+L-2]}(\mathbf{x}_t)\right\} \geq \left(\sum_{i=1}^{m+L}x_{t,i}-\sum_{j=t}^{t+m+L-1}D_{j}\right)^+.
\end{equation}

Taking the expectation of the left- and right-hand sides yields:
\begin{equation}\label{PIL-11}
(m+L)\mathbb{E}[O_t] \geq \mathbb{E}\left[\left(s-\mathcal{D}_{m+L}\right)^+\right]-\sum_{j=t+1}^{t+m-1}d_{j}.
\end{equation}

Similarly,
\begin{equation}\label{PIL-12}
\left(\sum_{i=1}^{m}x_{t+L,i}-D_{t+L}\right)^{+}=\left(\sum_{i=1}^{m+L}x_{t,i}-\sum_{j=t}^{t+L}D_{j}+\sum_{j=t}^{t+L-1}l_{j}-\sum_{j=t}^{t+L-1}O_j\right)^{+} 
\geq \left(\sum_{i=1}^{m}x_{t+L,i}-\sum_{j=t}^{t+L}D_{j}\right)^{+}-\sum_{j=t}^{t+L-1}O_{j}.
\end{equation}

For the OUT policy with an order-up-to level $s+\sum _{j=t}^{t+L} d_{j}$, this implies:
\begin{equation}\label{PIL-13}
\mathbb{E}\left[\left(\sum_{i=1}^{m}x_{t+L,i}-D_{t+L}\right)^{+}\right]\geq \mathbb{E}\left[\left(s-\mathcal{D}_{L+1}\right)^+\right]-L\mathbb{E}[O_t],
\end{equation}
and
\begin{equation}\label{PIL-14}
\left(D_{t+L}-\sum_{i=1}^{m}x_{t+L,i}\right)^{+}=\left(\sum_{j=t}^{t+L}D_{j}-\sum_{j=t}^{t+L-1}l_{j}+\sum_{j=t}^{t+L-1}O_{j}-\sum_{i=1}^{m+L}x_{t,i}\right)^{+} \geq \left(\sum_{j=t}^{t+L}D_{j}-\sum_{i=1}^{m+L}x_{t,i}\right)^{+}-\sum_{j=t}^{t+L-1}l_{j},
\end{equation}

Thus:
\begin{equation}\label{PIL-15}
(L+1)\mathbb{E}\left[\left(D_{t+L}-\sum_{i=1}^{m}x_{t+L,i}\right)^{+}\right]\geq \mathbb{E}\left[\left(\mathcal{D}_{L+1}-s\right)^+\right].
\end{equation}

Combining the above inequalities, we can establish a bound for the total cost as follows:
\begin{equation}\label{Eq: tot cost PIL 1}
\begin{aligned}
C^{\pi_u} &= \mathbb{E}\left[\sum_{t=1}^{T}(h(Y_{t}-D_t)^{+}+p(D_t-Y_{t})^{+}+wO_t)\right] \\
&\geq\sum_{t=1}^{T}\left(h\mathbb{E}[(s-\mathcal{D}_{L+1})^{+}]+\frac{p}{L+1}\mathbb{E}[(\mathcal{D}_{L+1}-s)^{+}]+\frac{w-hL}{m+L}\mathbb{E}[(s-\mathcal{D}_{m+L})^{+}]
-\frac{w-hL}{m+L}\sum_{j=t+1}^{t+m-1}d_{j}\right).
\end{aligned}
\end{equation}

Thus:
\begin{equation}\label{Eq: tot cost PIL 2}
\begin{aligned}
C^{\pi_u} &= \mathbb{E}\left[\sum_{t=1}^{T}(h(Y_{t}-D_t)^{+}+p(D_t-Y_{t})^{+}+wO_t)\right] \\
&\geq T\left(h\mathbb{E}[(s-\mathcal{D}_{L+1})^{+}]+\frac{p}{L+1}\mathbb{E}[(\mathcal{D}_{L+1}-s)^{+}]+\frac{w-hL}{m+L}\mathbb{E}[(s-\mathcal{D}_{m+L})^{+}] -\frac{w-hL}{m+L}\sum_{t=1}^{T}\sum_{j=t+1}^{t+m-1}d_{j}\right).
\end{aligned}
\end{equation}

\subsubsection{PIL Policy Upper Bound}
We conclude this subsection by providing an upper bound on the total cost for the PIL policy. Based on the findings of \cite{Bu2023_4638265}, we note that Lemmas 2 and 4 remain applicable when demand is non-stationary. Specifically, by their Lemma 2, we have $\mathbb{E}\left[\sum_{i=1}^m x_{t+L,i}\right]=U_{t+L}=u+d_{t+L}$, and by the material conservation law, we can express:
\begin{equation}
\begin{aligned}
\mathbb{E}\left[\sum_{i=1}^m x_{t,i}\right] &= d_t+u = \mathbb{E}\left[\sum_{i=1}^{m+L-1} x_{t-L,i}+q_{t-L}-\sum_{j=t-L}^{t-1} \left(D_j-l_j+O_j\right)\right] \\
&= \sum_{i=1}^{m+L-1} x_{t-L,i}+q_{t-L}-\sum_{j=t-L}^{t-1} d_j-\mathbb{E}\left[\sum_{j=t-L}^{t-1} \left(O_j-l_j\right)\right].
\end{aligned}
\end{equation}

This implies:
\begin{equation}
\begin{aligned}
\sum_{i=1}^{m+L-1} x_{t-L,i}+q_{t-L} &=u+ \sum_{j=t-L}^{t} d_j+\mathbb{E}\left[\sum_{j=t-L}^{t-1} \left(O_j-l_j\right)\right].
\end{aligned}
\end{equation}

By Lemma 4 (part b) of \cite{Bu2023_4638265}, the variables $-\sum_{j=t-L}^{t-1} O_j$, $\sum_{j=t-L}^{t-1} l_j$, and $\sum_{j=t-L}^{t-1} (D_j- l_j +O_j)$ are component-wise increasing in $D_j$, $j=t, \ldots,t+L-1$. We thus define the following associated random variables:
\begin{equation}
\begin{aligned}
A_t &= \sum_{j=t-L}^{t-1} \left(-l_j+O_j+\xi_t\right) - \mathbb{E}\left[\sum_{j=t-L}^{t-1} \left(O_j-l_j\right)\right] - \left(d_t+u\right) + d_t + \xi_t, \\
B_t &= \sum_{j=t-L}^{t-1} l_j - \mathbb{E}\left[\sum_{j=t-L}^{t-1} l_j\right], \\
R_t &= \mathbb{E}\left[\sum_{j=t-L}^{t-1} O_j\right] - \sum_{j=t-L}^{t-1} O_j.
\end{aligned}
\end{equation}

Since $\mathbb{E}\left[B_t\right]=0$ and $\mathbb{E}\left[R_t\right]=0$, by Lemma 7 of \cite{vanJaarsveld2024}, we can write:
\begin{equation}
\begin{aligned}
\mathbb{E}\left[\left(D_t-\sum_{i=1}^m x_{t,i}\right)^+\right] &= \mathbb{E}\left[\left(A_t\right)^+\right] \leq \mathbb{E}\left[\left(A_t+B_t\right)^+\right] \leq \mathbb{E}\left[\left(A_t+B_t+R_t\right)^+\right] \\ &\leq \mathbb{E}\left[\left(\sum _{j=t-L}^t \xi _j-u\right)^+\right] = \mathbb{E}\left[\left(\mathcal{D}_{L+1}-u\right)^+\right],
\end{aligned}
\end{equation}
and
\begin{equation}
\begin{aligned}
\mathbb{E}\left[\left(\sum_{i=1}^m x_{t,i}-D_t\right)^+\right] &= \mathbb{E}\left[\left(D_t-\sum_{i=1}^m x_{t,i}\right)^++\sum_{i=1}^m x_{t,i}-D_t\right] =\mathbb{E}\left[\left(D_t-\sum_{i=1}^m x_{t,i}\right)^+\right]+\mathbb{E}\left[\sum_{i=1}^m x_{t,i}\right]-\mathbb{E}\left[D_t\right] \\
&\leq \mathbb{E}\left[\left(\sum_{j=t-L}^t \xi_j-u\right)^+\right]+u =\mathbb{E}\left[\left(\sum_{j=t-L}^t \xi_j-u\right)^++u-\sum_{j=t-L}^t \xi_j\right]\\
&= \mathbb{E}\left[\left(u-\sum_{j=t-L}^t \xi_j\right)^+\right]=\mathbb{E}\left[\left(u-\mathcal{D}_{L+1}\right)^+\right].
\end{aligned}
\end{equation}

Thus, we can bound the expected expired quantity as:
\begin{equation}
\sum_{j=t}^{m+t-1} O_j\leq \left(\sum_{i=1}^m x_{t,i}-D_t\right)^+.
\end{equation}

Taking the expectation of the left- and right-hand sides yields:
\begin{equation}
m\mathbb{E}\left[O_t\right]\leq \mathbb{E}\left[\left(\sum_{i=1}^m x_{t,i}-D_t\right)^+\right]\leq \mathbb{E}\left[\left(u-\sum_{j=t-L}^t \xi_j\right)^+\right]=\mathbb{E}\left[\left(u-\mathcal{D}_{L+1}\right)^+\right].
\end{equation}

Combining the inequalities above, we can finally upper bound the total cost of the PIL policy as:
\begin{equation} \label{eq:ub_pil}
C^{\pi_u} \leq T\left((h+\frac{w}{m}) \mathbb{E}\left[\left(u-\mathcal{D}_{L+1}\right)^+\right]+ b \mathbb{E}\left[\left(\mathcal{D}_{L+1}-u\right)^+\right]\right).
\end{equation}

\subsubsection{PIL Implementation}
The proposed PIL policy requires estimating the expected projected inventory level $L$ timesteps into the \textit{future} to calculate the order quantity. This approach involves computing the expected lost sales and expired stock over $L$ timesteps. For this purpose, we adopt a methodology aligned with \cite{Zhang2016} and \cite{Bu2023_4400}. Similar to the OUT policy, the safety stock employed by the PIL policy is determined using Monte Carlo simulations tailored to its specific requirements (see Section \ref{sec:bs} for details).

To estimate the expected inventory level at timestep $t+L$, given the current inventory vector (i.e., $\mathbb{E}\left[\sum_ {i=1}^{m} x_{t+L,i} \mid \mathbf{x_t}\right]$), we perform 2000 simulated episodes over the entire episode horizon $T$, generating 2000 demand sample paths for each timestep $t$. Specifically, following the order of events defined in our supply chain environment (Section \ref{sec:section3}), at each timestep $t$, we: \textit{i)} set the actual order quantity $n_t$ to zero, as it does not affect the computation of the expected inventory level at timestep $t+L$ but impacts timestep $t+L+1$; \textit{ii)} update the total inventory level based on received inventory in transit, as represented in Equation \ref{Eq:Dynamic}; and \textit{iii)} track any potential lost sales and expired stock. Upon completing the simulations for each timestep within the interval from $t$ to $t+L$, we compute the average amounts of expired stock and lost sales over this interval. These averages provide the expected inventory level at timestep $t+L$ necessary for determining the order quantity $n_t$, as expressed in Equations \ref{PIL-6} and \ref{PIL-8}.

\subsection{PPO Algorithm} \label{sec:ppo}
PPO is an actor-critic algorithm designed to enhance the efficiency and stability of policy updates \citep{schulman2017proximal}. It modifies the objective function to prevent large updates to the policy by introducing a \textit{clipped objective function}:
\begin{equation}
L^{CLIP}(\theta) = \hat{\mathbb{E}}_t \left[\min\left(r_t(\theta) \hat{A}_t, \text{clip}\left(r_t(\theta), 1 - \epsilon, 1 + \epsilon\right) \hat{A}_t\right)\right],
\end{equation}
where $r_t(\theta) = \frac{\pi_{\theta_{new}}(a_t|s_t)}{\pi_{\theta_{old}}(a_t|s_t)}$ represents the probability ratio between the new policy $\pi_{\theta_{new}}$ and the old policy $\pi_{\theta_{old}}$ for taking action $a_t$ in state $s_t$. The hyperparameter $\epsilon$ controls the clipping range to ensure that updates remain constrained. Specifically, the function $\text{clip}(x, 1 - \epsilon, 1 + \epsilon)$ constrains $x$ within the interval $[1 - \epsilon, 1 + \epsilon]$. This mechanism limits the magnitude of policy updates, enhancing stability and reducing the risk of divergence during training.

The advantage estimate $\hat{A}_t$ is computed using \textit{generalized advantage estimation} (GAE), which balances bias and variance in the estimation process:
\begin{equation}
\hat{A}_t = \delta_t + (\gamma \lambda) \delta_{t+1} + \cdots + (\gamma \lambda)^{T-t+1} \delta_{T-1},
\end{equation}
where $\delta_t = r_t + \gamma V_t(s_{t+1}) - V_t(s_t)$ represents the temporal difference error at timestep $t$, $\gamma$ is the discount factor, $\lambda$ is the GAE parameter, and $T$ is the episode horizon. By incorporating cumulative future advantages, GAE enables more nuanced policy updates, optimizing decisions in a stable and computationally efficient manner. The combination of the clipped objective function and GAE forms the basis of PPO, making it well-suited for addressing complex inventory problems. For a more detailed discussion of the PPO algorithm and its theoretical foundations, readers are encouraged to refer to \cite{schulman2017proximal}.

\subsubsection{PPO Implementation}
To implement the PPO algorithm and learn a well-performing policy, $\pi_{PPO}$, we formalize the supply chain environment as a Markov decision process (MDP) in terms of features (observations), actions, and rewards.

As is standard in inventory management representations, the state is defined as a feature vector (the \textit{observation}) serving as input to the neural network \cite[see][]{Boute2022}. Conventionally, features would consist of the inventory vector $\mathbf{x}_t$ and the timestep $t$, which together form a \textit{sufficient statistic} for the state. However, relying solely on this information places the entire burden on the neural network to learn the product life-cycle phases and interpret the inventory vector accordingly. Preliminary experiments revealed that, in our case study, this approach results in poor performance, as the neural network struggles to manage these complexities effectively.

To overcome this constraint, we propose enriching the feature vector by including a \textit{forecast of upcoming demands}, similar to those used in the OUT and PIL policies. Additionally, rather than directly using the inventory vector $\mathbf{x}_t$ at timestep $t$, we adopt projection-based ideas inspired by the literature \citep{Broekmeulen2009,vanJaarsveld2024} to estimate the future inventory vector at timestep $t+L$, just before the order placed at timestep $t$ arrives.

As a result, the \textit{feature vector} at timestep $t$ is defined as:
\begin{equation} \label{eq:state1}
\left( \mathbb{E}[\mathbf{x}_{t+L}], d_{[t:t+L]}, t \right),
\end{equation}
where $\mathbb{E}[\mathbf{x}_{t+L}] = \left( \mathbb{E}[x_{t+L,1}], \mathbb{E}[x_{t+L,2}], \ldots, \mathbb{E}[x_{t+L,m}] \right)$ represents the \textit{expected inventory levels} at timestep $t+L$ for each age category $i \in \{1, 2, \ldots, m\}$. These expected values are computed under the assumption that upcoming demands equal their forecasts, enabling a deterministic estimation. Additionally, $d_{[t:t+L]} = (d_{t,t+1}, d_{t,t+2}, \ldots, d_{t,t+L})$ represents the demand forecasts from timestep $t$ to $t+L$, and $t$ is the current timestep. Preliminary experiments demonstrated that this new feature representation outperforms standard representations relying solely on current inventory levels \citep[see, e.g.,][]{Oroojlooyjadid2022,Gijsbrechts2022,DeMoor2022,Kaynov2024,Stranieri2024_IJPE,Stranieri2024}.

The \textit{action} $a_t$ at timestep $t$ determines the order quantity:
\begin{equation}
q_t^{\pi_{PPO}}=a_tQ=n_t^{\pi_{PPO}}Q,
\end{equation}
where $a_t$ represents the number of batches ordered, chosen from the discrete set $\{0, 1, 2, \ldots, 6\}$. Here, an upper limit of 6 reflects the maximum number of batches that can be ordered in the BMS case study.

Finally, the \textit{reward} $r_t \in \mathbb{R}$ at timestep $t$ is calculated based on the following cost components:
\begin{equation}
r_t = - \left(K(q_t) + h(Y_{t} - D_t)^{+} + b(D_t - Y_{t})^{+} + wO_t\right),
\end{equation}
where $K(q_t)$ represents the batch ordering cost, $h$ is the holding cost, $b$ is the lost sales cost, and $w$ is the expiration cost, as detailed in Section \ref{sec:section3}. Note that we define the reward with a negative sign since the objective is to minimize the total cost.

\section{Numerical Experiments} \label{sec:section5}
In this section, we present \textit{numerical experiments} designed to evaluate the performance of the implemented policies under various demand scenarios based on synthetic data provided by BMS. All experiments were conducted on a machine equipped with an 11th Gen Intel(R) Core(TM) i7-11800H CPU at 2.30 GHz and 32 GB of RAM. The code was developed using Python 3.10, leveraging the OpenAI Gym library \citep{brockman2016openai} to define the supply chain environment and the Stable Baselines 3 library \citep{stable-baselines} to implement the PPO algorithm. The code is publicly available as an open-source library on GitHub \footnote{\url{https://github.com/frenkowski/SCIMAI-Gym}.}.

In detail, we consider two demand scenarios for the numerical experiments, both modeled as non-stationary to reflect the \textit{complete lifecycle} of a specific perishable pharmaceutical drug, as discussed with BMS. The lifecycle begins with initial growth as the product gains market acceptance, reaches a peak during its maturity phase, and then declines as alternatives emerge or patents expire. This demand pattern closely mirrors real-world conditions, providing a realistic basis for evaluating the proposed policies in pharmaceutical supply chains.

The \textit{first scenario} is adapted from company-provided data and covers a period of 5 years ($T=60$ monthly timesteps), incorporating two levels of demand noise. In the \textit{worst-case setting}, demand noise is modeled as $\xi_t \sim \mathcal{N}(0, \bar{d} \times 15\%)$, where $\bar{d} = \max_t{d_t}$, representing high fluctuations during the peak phase. In contrast, the \textit{balanced setting} models noise as $\xi_t \sim \mathcal{N}(0, d_t \times 15\%)$, reflecting more moderate fluctuations. These settings were chosen to evaluate the effectiveness of the implemented policies under varying levels of uncertainty. Specifically, the worst-case setting assesses policy performance under high uncertainty, while the balanced setting represents more stable market conditions. In both cases, the forecast error $\xi_t$ follows i.i.d. normal distributions with a mean of zero and a standard deviation $\sigma$. This assumption is essential for deriving the lower and upper bounds necessary to optimize the OUT and PIL policies, as discussed in Sections \ref{sec:bs} and \ref{sec:pil}.

The \textit{second scenario} is based on real-world data provided by BMS, modeling the complete lifecycle of a specific perishable pharmaceutical drug over a 20-year period ($T=240$ monthly timesteps), as illustrated in Figure \ref{fig:demand_2}. This scenario employs the same balanced noise setting described in the first scenario and serves as a reference for more practical and comprehensive analysis.

For each experiment, we compare the performance of the policies by analyzing the \textit{average total cost} over 2000 simulated episodes. By assessing performance across varying levels of demand noise and over both short-term and long-term horizons, this study provides valuable insights into the cost-effectiveness and robustness of the implemented policies in addressing typical pharmaceutical inventory challenges.

\begin{figure}[ht!]
    \centering
     \includegraphics[width=0.65\linewidth]{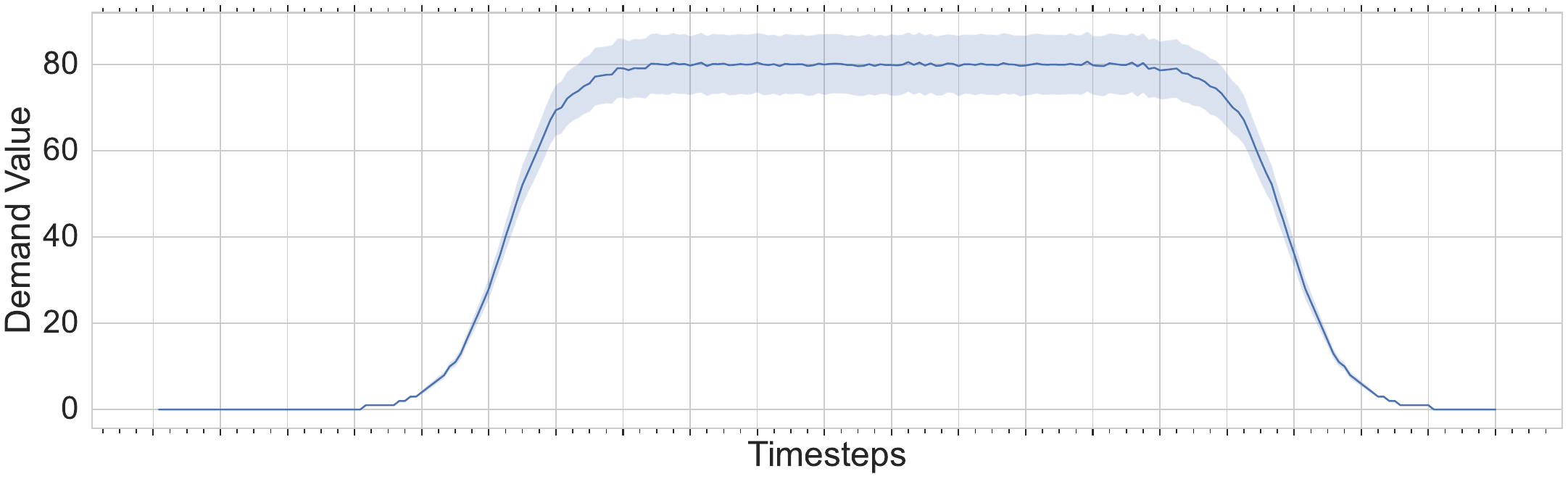}
        \caption{A 95\% confidence interval based on 2000 simulated episodes for the demand in the second scenario derived from real-world data over an episode horizon of $T = 240$ timesteps.}
        \label{fig:demand_2} 
\end{figure}

\subsection{First Scenario}
In the first scenario, we evaluate the three inventory policies introduced in Section \ref{sec:section4}--OUT, PIL, and PPO--under the worst-case and balanced settings. The \textit{experimental plan} involves varying several parameters of the supply chain environment. Specifically, the expiration cost ($w$) is set to 2 and 4, the product lifetime ($m$) ranges from 2 to 4, and the lost sales cost ($b$) takes on values of 10, 50, 100, and 1000. Other parameters are kept constant to simplify the analysis. The holding cost ($h$) is fixed at 1, the lead time ($L$) is set to 2, and both the batch ordering cost ($K(q_t)$) and yield rate ($\hat{z}$) are set to 0. This setup results in 24 unique experiments for each of the two settings (worst-case and balanced). These experiments systematically evaluate the policies under varying cost structures and product characteristics, providing valuable insights into their performance across diverse market conditions.

\subsubsection{Worst-Case Setting}
The bounds derived for the optimal parameters in the worst-case setting simplify the optimization process by constraining the \textit{search interval} for the OUT and PIL policies. As shown in Table \ref{tab:results_bounds} of \ref{appendix:4}, the optimal values for both the OUT and PIL policies consistently fall within this interval. These bounds are particularly advantageous for the PIL policy, which requires more computational resources than OUT due to the need to calculate expected inventory levels. Although these bounds were derived based on the total cost, they also provide valuable estimates for the optimal parameters of both policies. While formal proof that the optimal values strictly lie within this interval is lacking, the numerical results strongly support this conjecture.

In this first scenario, since there is no upper limit on the number of batches that can be ordered, the upper bound constraining the \textit{action space} of the PPO algorithm was determined by selecting the maximum of the optimal values between the OUT and PIL policies, which minimize Equation \ref{eq:ub_out} and Equation \ref{eq:ub_pil}, respectively. This choice is motivated by the observation that optimal values tend to be lower for non-perishable products. Consequently, the non-perishable bounds derived from the OUT and PIL policies were effectively used to constrain the PPO action space.

Turning to the results, as depicted in Figures \ref{fig:results_bounds} and \ref{fig:results_without_std}, when the product lifetime $m$ is set to 2, PIL and PPO exhibit similar performance, both slightly outperforming OUT when the lost sales cost $b$ is low. However, as $b$ increases, PPO significantly outperforms OUT while maintaining competitiveness with PIL, with the performance gap widening as $b$ reaches its highest value. As the product lifetime increases to $m=3$, PIL remains the most effective policy as lost sales costs rise, with PPO performing on par with it while both consistently outperform OUT. Finally, when the product lifetime extends to $m=4$, the performance gap among the three policies tends to narrow. PIL continues to excel, particularly at the highest value of $b$. In contrast, PPO's performance slightly declines, even compared to OUT in some cases when the expiration cost $w$ is set to 2. However, OUT demonstrates higher variability in terms of standard deviation, reflecting its sensitivity to cost fluctuations, especially when $b=1000$.

\begin{figure}[ht!]
    \centering
     \includegraphics[width=0.70\linewidth]{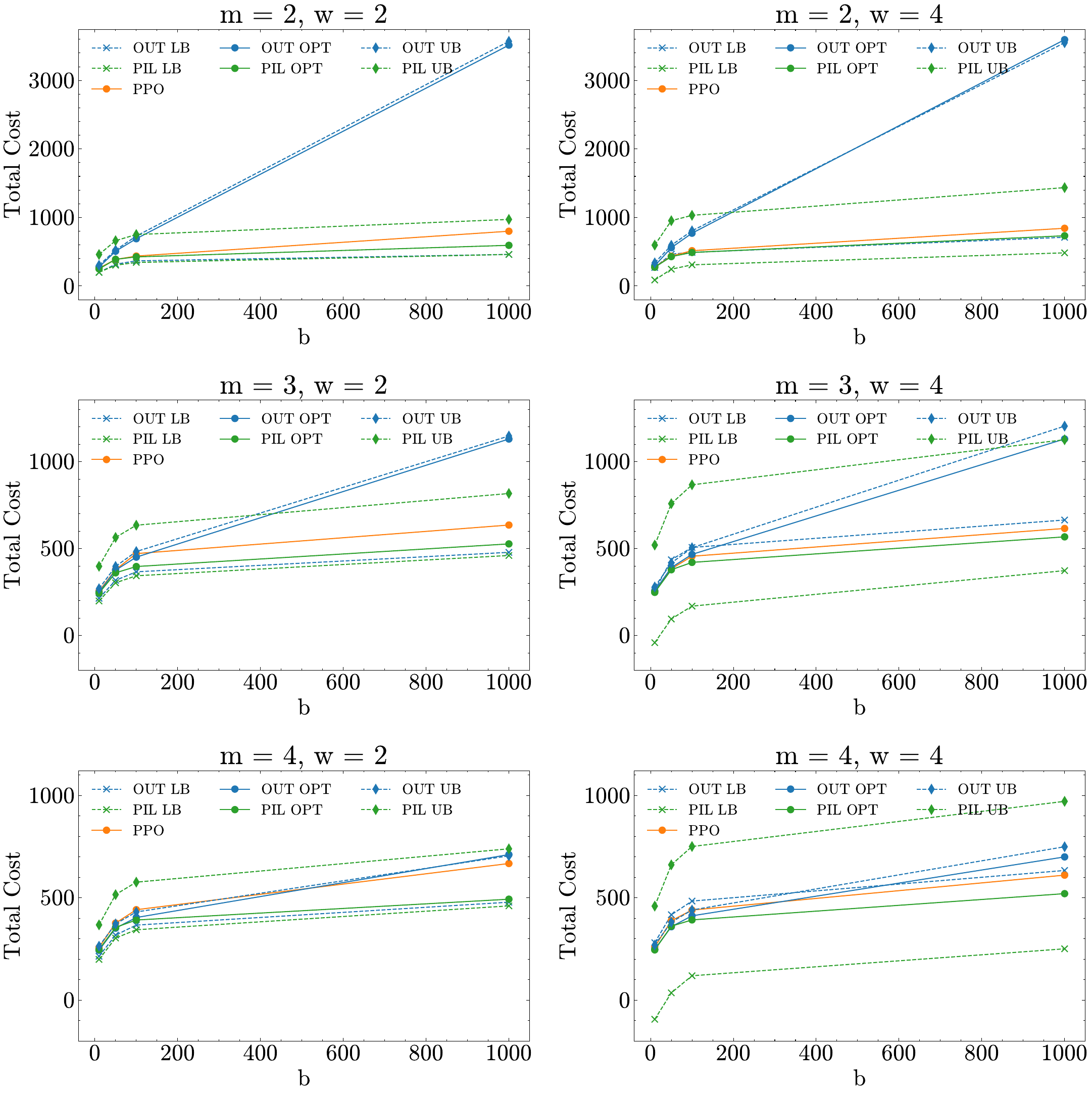}
        \caption{Average total cost for the PPO algorithm, with lower (LB), upper (UB), and optimal (OPT) values for the OUT and PIL policies. Demand noise is modeled as $\xi_t \sim \mathcal{N}(0, \bar{d} \times 15\%)$, where $\bar{d} = \max_t{d_t}$. Each row corresponds to a different value of $m = \{2, 3, 4\}$, and each column to a different value of $w = \{2, 4\}$. Each subplot shows the OUT, PIL, and PPO costs for $b = \{10, 50, 100, 1000\}$.}
        \label{fig:results_bounds} 
\end{figure} 

\begin{figure}[ht!]
    \centering
     \includegraphics[width=0.65\linewidth]{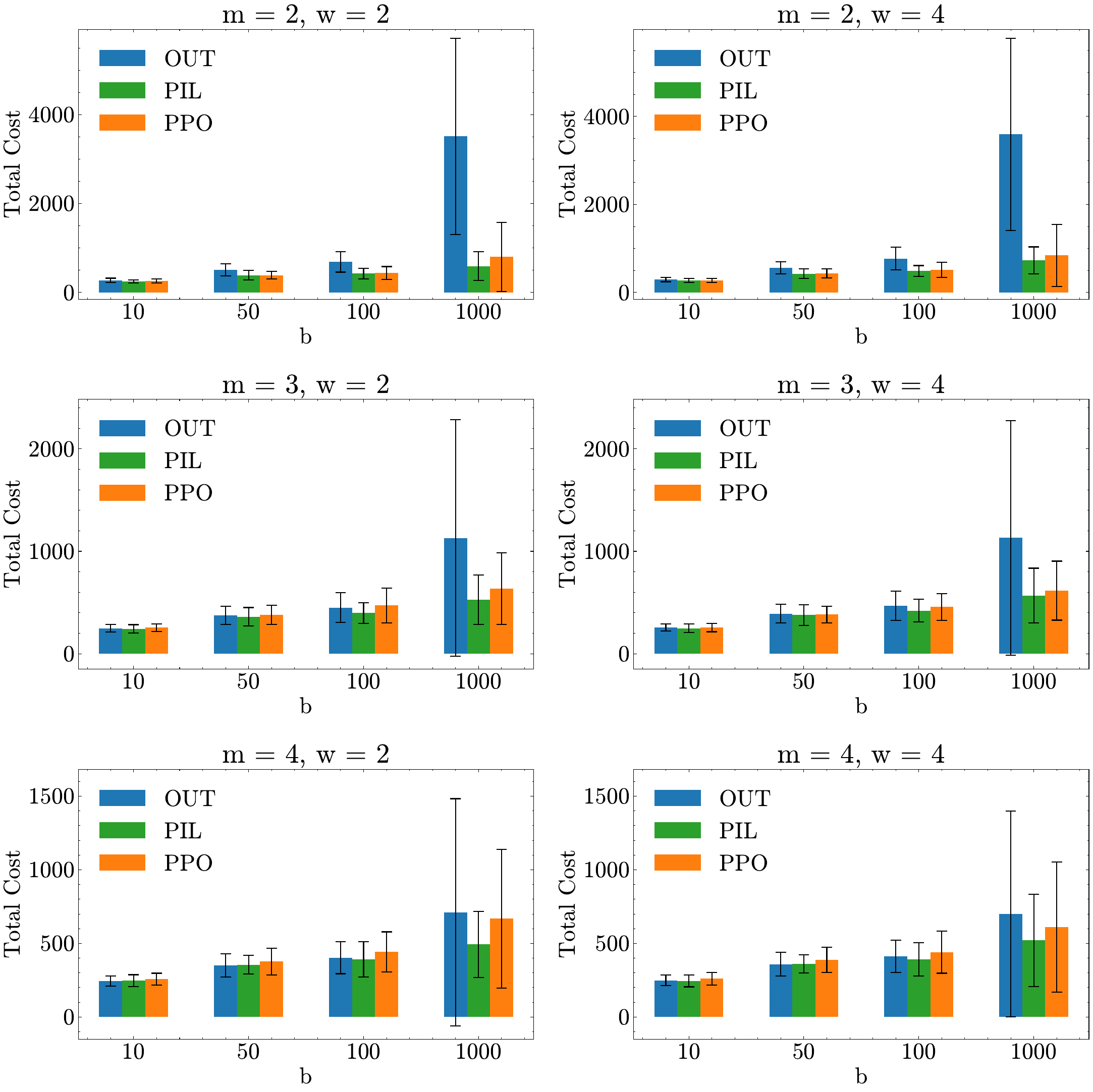}
        \caption{Bar plots representing the average total cost over 2000 simulated episodes, with demand noise modeled as $\xi_t \sim \mathcal{N}(0, \bar{d} \times 15\%)$, where $\bar{d} = \max_t{d_t}$. Each row corresponds to a different value of $m = \{2, 3, 4\}$, and each column to a different value of $w = \{2, 4\}$. In each subplot, the bars show the OUT, PIL, and PPO costs for $b = \{10, 50, 100, 1000\}$.}
        \label{fig:results_without_std} 
\end{figure}

Note that the OUT policy exhibits significant variability, primarily driven by fluctuations in lost sales costs. Specifically, as lost sales costs increase, even minor changes between episodes can substantially impact the average total cost. In contrast, expiration and holding costs remain relatively stable and smaller in magnitude. The \textit{dominance} of lost sales costs means that differences or outliers disproportionately affect the standard deviation. This variability is likely due to demand uncertainty, which overshadows the stability of other cost components, resulting in significant variability in the average total cost, particularly when lost sales costs reach their highest values.

\subsubsection{Balanced Setting}
To analyze the performance of the three implemented policies further, we evaluate them under more moderate fluctuations. Figure \ref{fig:results_with_std} presents the results for this balanced setting. When the product lifetime $m$ is set to 2, the OUT policy significantly underperforms under high lost sales costs and exhibits a high standard deviation, consistent with its behavior in the worst-case setting. In contrast, PPO consistently achieves lower total costs compared to PIL, particularly as lost sales costs increase.

As the product lifetime increases to $m=3$, PPO continues to achieve the lowest total cost. The OUT and PIL policies perform similarly, with OUT occasionally outperforming PIL, particularly when $b=1000$, although these differences remain minimal.

Finally, when the product lifetime is extended to $m=4$, and the expiration cost $w$ is set to 2, all three policies yield nearly identical total costs. However, when the expiration cost increases to $w=4$, and the lost sales cost $b$ is set to 1000, PPO performs slightly worse than the other policies, while PIL and OUT maintain consistent performance levels.

\begin{figure}[ht!]
    \centering
     \includegraphics[width=0.65\linewidth]{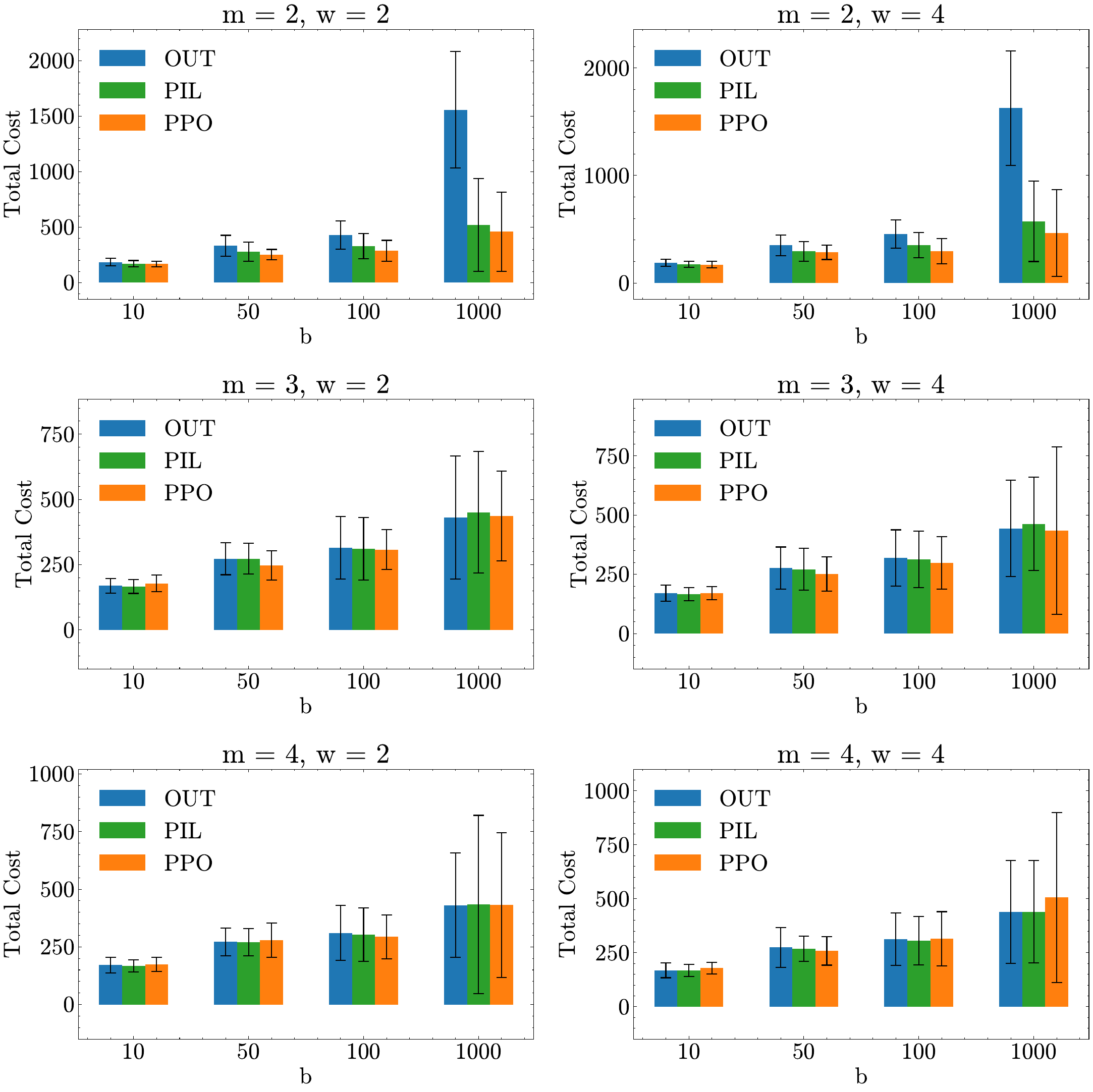}
        \caption{Bar plots representing the average total cost over 2000 simulated episodes, with demand noise modeled as $\xi_t \sim \mathcal{N}(0, d_t \times 15\%)$. Each row corresponds to a different value of $m = \{2, 3, 4\}$, and each column to a different value of $w = \{2, 4\}$. In each subplot, the bars show the OUT, PIL, and PPO costs for $b = \{10, 50, 100, 1000\}$.}
        \label{fig:results_with_std} 
\end{figure} 

Interestingly, the OUT policy shows a slight improvement in this balanced setting, emerging as the most efficient policy in specific experiments, even when lost sales costs are high. This improvement is particularly evident in experiments with longer product lifetimes ($m = 3$ or $m = 4$). However, it continues to underperform in experiments with the shortest product lifetime ($m = 2$) as lost sales costs reach $b=1000$. The PPO algorithm performs efficiently in nearly all experiments, although it exhibits a slight decline in performance at the highest expiration and lost sales costs with the longest product lifetimes. This behavior is consistent with observations in the worst-case setting. The slight decline, while expected, may also be partially attributed to challenges in hyperparameter tuning.

The main \textit{insight} from both the worst-case and balanced settings is that no single policy consistently dominates across all experiments. Instead, each policy has experiments where it outperforms the others. Furthermore, the asymptotic optimality of the PIL policy, observed for non-perishable products, does not necessarily extend to perishable products.

\subsection{Second Scenario}
In the second scenario, we evaluate the performance of the OUT, PIL, and PPO policies against the BMS baseline derived from \textit{human expertise}. This human-driven policy is based on decisions made by a team of expert human planners at BMS with extensive knowledge of managing perishable pharmaceutical drugs, as discussed in Section \ref{sec:bmspol}. These decisions rely on the same information provided to the PPO algorithm through Equation \ref{eq:state1}, ensuring a fair comparison.

A key distinction between DRL-based approaches, such as PPO, and the human-driven policy lies in adaptability. While human planners' decisions are built on expertise, they remain \textit{static and are provided a priori}. Consequently, the human-driven policy cannot adjust dynamically in response to the actual state of the supply chain environment or react to unplanned demand fluctuations. In fact, these static decisions remain fixed throughout the simulated episodes, limiting their ability to manage the variability and uncertainty typically encountered in real-world conditions. In contrast, PPO offers a more flexible and responsive solution, \textit{dynamically adapting} at each timestep based on the current state of the environment. Despite its simplicity, the human-driven policy serves as a valuable benchmark, highlighting its limitations and providing a reference point for evaluating advanced, data-driven policies. This comparison underscores the potential for significant improvements in efficiency and responsiveness, which are particularly relevant to pharmaceutical companies.

Table \ref{tab:bms_5} summarizes the performance of the four implemented policies (OUT, PIL, PPO, and the BMS baseline) under various yield rates ($\hat{z} = 0\%, 5\%, 10\%$) and lost sales costs ($b = 100, 1000, 10000$) across nine different experiments. All experiments assume a lead time of $L = 12$ timesteps, a product lifetime of $m = 12$ timesteps, and an expiration cost of $\hat{w} = 3.0$. It is important to note that the human-driven policy was explicitly designed for a specific experiment (with $\hat{z} = 10\%$ and $b = 100$) and then applied without modification to the other experiments.

\begin{table}[ht!]
\centering
\scriptsize
\caption{Average total cost for OUT, PIL, PPO, and the human-driven policy over 2000 simulated episodes in the second scenario derived from real-world data, for values of $\hat{z} = \{0\%, 5\%, 10\%\}$ and $b = \{100, 1000, 10000\}$. Lower cost values indicate better performance.}
\label{tab:bms_5}
\renewcommand{\arraystretch}{0.5} 
\begin{tabular}{c l l l l l}
\toprule
\multicolumn{1}{c}{$\hat{z}$}    & \multicolumn{1}{c}{$b$}     & \multicolumn{1}{c}{OUT}                     & \multicolumn{1}{c}{PIL}                    & \multicolumn{1}{c}{PPO}                    & \multicolumn{1}{c}{Human}                 \\ \midrule
\multirow{3}{*}{\centering 0\%}      
         & 100     & 19392 $\pm$  3438     & 21457 $\pm$  3594     & 17594 $\pm$  2981     & 120538 $\pm$   758    \\ 
         & 1000    & 28182 $\pm$ 10912     & 31824 $\pm$ 10403     & 25125 $\pm$  9527     & 120538 $\pm$   758    \\ 
         & 10000   & 35580 $\pm$ 23507     & 40011 $\pm$ 32987     & 33829 $\pm$ 22952     & 120538 $\pm$   758    \\ \midrule
\multirow{3}{*}{\centering 5\%}      
         & 100     & 20853 $\pm$  3228     & 23752 $\pm$  3909     & 18861 $\pm$  2796     & 100968 $\pm$  1102    \\ 
         & 1000    & 29796 $\pm$  5478     & 34195 $\pm$ 11336     & 28069 $\pm$  9410     & 100968 $\pm$  1102    \\ 
         & 10000   & 35869 $\pm$ 21999     & 41497 $\pm$ 22702     & 36070 $\pm$ 25478     & 100968 $\pm$  1102    \\ \midrule
\multirow{3}{*}{\centering 10\%}     
         & 100     & 22588 $\pm$  3349     & 26349 $\pm$  3852     & 18608 $\pm$  2838     & \phantom{0}77477 $\pm$  1170     \\ 
         & 1000    & 30847 $\pm$  9606     & 36755 $\pm$ 10183     & 29319 $\pm$  9450     & \phantom{0}80613 $\pm$  1890     \\ 
         & 10000   & 37415 $\pm$ 10711     & 44803 $\pm$ 27323     & 37061 $\pm$ 33831     & 113766 $\pm$ 20326    \\ 
\bottomrule
\end{tabular}
\end{table}

When the yield rate is 0\%, all implemented policies outperform the human-driven policy. Despite the lost sales cost, PPO consistently achieves the lowest costs and standard deviations across all experiments. Interestingly, the OUT policy incurs lower costs than PIL, with the performance gap widening as $b$ increases.

At a 5\% yield rate, the costs of the human-driven policy decrease due to the higher production yield, which reduces accumulated stock compared to the $\hat{z} = 0\%$ case. Here, PPO continues to outperform PIL across all experiments, while OUT achieves the lowest costs in the most challenging experiment, where $b = 10000$.

When the yield rate extends to 10\%, the performance gap between the human-driven policy and the others narrows. PPO remains the most cost-effective policy across all experiments. However, OUT performs comparably to PPO when $b = 10000$ and exhibits a significantly lower standard deviation than the other policies.

As previously discussed, the high total costs associated with the BMS baseline stem from its static nature, which prevents dynamic adaptation to changes in the supply chain environment. In fact, human planners rely on predetermined actions at each timestep, which remain unchanged throughout the simulated episodes. An analysis of the experiment explicitly designed for the human-driven policy reveals a \textit{tendency to accumulate excess on-hand inventory}. On one hand, this approach almost completely avoids lost sales, particularly in experiments with reduced production yields. On the other hand, it results in significantly higher holding costs, explaining the observed cost gap. While the other policies achieve lower average total costs, they tend to exhibit higher standard deviations, primarily due to experiencing lost sales during the simulated episodes. In contrast, the human-driven policy nearly satisfies all demands.

According to BMS, the average total cost may not be the sole \textit{evaluation criterion} for human planners. Instead, they aim to balance the risk of lost sales with product availability, trying to minimize overstocking and associated product expiration while adhering to ethical and legal constraints regarding potential stockouts.

Contrary to the results reported in the first scenario, PIL does not consistently outperform OUT and PPO across all experiments, with the performance gap widening as production yield increases. As demonstrated in \cite{Bu2023_4638265}, PIL achieves \textit{optimality} when the product lifetime is restricted to a single period. In other experiments, PIL tends to underperform compared to OUT, as observed in this second scenario, where both the lead time and product lifetime are set to 12. Indeed, PIL does not always converge to an optimal solution, even as lost sales costs increase.

To investigate these findings further, we conduct a more detailed analysis of the real-world case study using varying key parameters identified in the first scenario. Specifically, we vary product lifetimes from the set $m = \{3, 6, 12, 24\}$ and lost sales costs from the set $b = \{10, 50, 100, 1000, 10000\}$ for each considered expiration cost $w = \{3, 6\}$, and lead times $L = \{3, 6, 12\}$, while maintaining the production yield $\hat{z}$ constant at zero. The results of these experiments provide deeper insights into the strengths and limitations of the implemented policies when applied across a more diversified and extended set of experimental conditions.
    
\begin{table}[ht!]
    \centering
    \scriptsize
    \caption{Percentage gap in performance between OUT vs. PPO and PIL vs. PPO for various values of $w = \{3, 6\}$, $m = \{3, 6, 12, 24\}$, and $b = \{10, 50, 100, 1000, 10000\}$, at lead times $L = \{3, 6, 12\}$. A positive percentage value indicates better performance by PPO.}
    \label{tab:comparison}
    \renewcommand{\arraystretch}{0.5} 
    \begin{tabular}{c|c|c|c c c c c|c c c c c}
        \toprule
        \multicolumn{3}{c|}{} & \multicolumn{5}{c|}{OUT vs. PPO} & \multicolumn{5}{c}{PIL vs. PPO} \\ \midrule
        $L$ & $w$ & \multicolumn{1}{|c|}{$m$} & \multicolumn{5}{c|}{$b$} & \multicolumn{5}{c}{$b$} \\ 
            &     &                           & 10 & 50 & 100 & 1000 & 10000 & 10 & 50 & 100 & 1000 & 10000 \\ \midrule
        \multirow{8}{*}{3} 
            & \multirow{4}{*}{3} & 3   & 12 & 28 & 30 & 57 & 315 & 14 & 45 & 52 & 52 & 32 \\ 
            &                    & 6   & 10 & 25 & 26 & 29 & 159 & 11 & 35 & 36 & 30 & 7  \\ 
            &                    & 12  & 7  & 24 & 23 & 20 & -8  & 8  & 29 & 26 & 26 & -3 \\ 
            &                    & 24  & 7  & 23 & 23 & 8  & 8   & 7  & 24 & 25 & 10 & 10 \\ \cmidrule{2-13}
            & \multirow{4}{*}{6} & 3   & 15 & 33 & 42 & 67 & 279 & 18 & 58 & 74 & 76 & 43 \\ 
            &                    & 6   & 13 & 33 & 32 & 33 & 101 & 14 & 43 & 45 & 37 & 30 \\ 
            &                    & 12  & 8  & 31 & 29 & 29 & 4   & 10 & 36 & 33 & 35 & 8  \\ 
            &                    & 24  & 7  & 27 & 24 & 18 & -9  & 7  & 28 & 25 & 20 & -7 \\ \midrule
        \multirow{8}{*}{6} 
            & \multirow{4}{*}{3} & 3   & 9  & 17 & 20 & 55 & 204 & 14 & 45 & 51 & 47 & 24 \\ 
            &                    & 6   & 5  & 17 & 23 & 53 & 46  & 6  & 22 & 30 & 39 & 14 \\ 
            &                    & 12  & 13 & 29 & 28 & 60 & 571 & 21 & 60 & 72 & 72 & 64 \\ 
            &                    & 24  & 5  & 21 & 26 & 37 & -13 & 6  & 27 & 36 & 27 & -30 \\ \cmidrule{2-13}
            & \multirow{4}{*}{6} & 3   & 9  & 13 & 15 & 30 & 44  & 9  & 28 & 31 & 29 & 20 \\ 
            &                    & 6   & 7  & 21 & 20 & 9  & -1  & 7  & 23 & 23 & 11 & 3  \\ 
            &                    & 12  & 11 & 22 & 24 & 47 & 53  & 12 & 41 & 47 & 52 & 39 \\ 
            &                    & 24  & 6  & 21 & 16 & 17 & 0   & 6  & 23 & 17 & 22 & 2  \\ \midrule
        \multirow{8}{*}{12} 
            & \multirow{4}{*}{3} & 3   & 8 & 10 & 16 & 67 & 566 & 7 & 34 & 46 & 51 & 28 \\ 
            &                    & 6   & 4 & 9  & 7  & 53 & 113 & -1 & 22 & 24 & 25 & 2  \\ 
            &                    & 12  & 6 & 10 & 10 & 12 & 5   & 1  & 19 & 22 & 27 & 18 \\ 
            &                    & 24  & 3 & 10 & 14 & 23 & 10  & -4 & 10 & 15 & 14 & -3 \\ \cmidrule{2-13}
            & \multirow{4}{*}{6} & 3   & 11 & 14 & 20 & 89 & 499 & 15 & 46 & 62 & 69 & 44 \\ 
            &                    & 6   & 8  & 10 & 11 & 29 & 154 & 4  & 27 & 33 & 40 & 19 \\ 
            &                    & 12  & 0  & 6  & 10 & -1 & -2  & -4 & 15 & 24 & 16 & 10 \\ 
            &                    & 24  & 3  & 7  & 17 & 8  & 21  & -4 & 8  & 18 & -1 & 8  \\ 
        \bottomrule
    \end{tabular}
\end{table}

As shown in Table \ref{tab:comparison}, PPO significantly outperforms OUT for shorter product lifetimes (i.e., $m=3$ or $m=6$), particularly when lost sales costs reach their highest value ($b=10000$). In these experiments, PPO achieves a performance gap of approximately 500\% when the lead time $L$ is set to 12. As product lifetimes increase, OUT's performance improves accordingly, independent of expiration costs, and it slightly surpasses PPO in some experiments where $b=1000$ or $b=10000$.

PIL performs well when lost sales costs are at their lowest value ($b=10$) and product lifetimes are longer (i.e., $m \geq 6$), especially when lead times are at their maximum value ($L = 12$). PIL also occasionally outperforms PPO when $m=12$ or $m=24$ with $b \geq 1000$, although no clear trend emerges. However, as lost sales costs increase to a range between 50 and 100, PIL's performance tends to decline, consistently favoring PPO.

A critical factor influencing PIL's performance is the term $\mathbb{E}\left[\sum_{j=t}^{t+L-1} O_j - l_j\right]$, as defined in Equation \ref{PIL-6}. This term represents the difference between the expected expired quantity and lost sales over the lead time. For non-perishable products, incorporating only the expected expired quantity when computing $q_t$ allows PIL to outperform OUT \citep{Broekmeulen2009}. However, when PIL subtracts the expected lost sales term, this \textit{non-zero mean term} can fluctuate between positive and negative values across timesteps, particularly under non-stationary demand. Such variability makes it challenging for PIL to determine optimal order quantities, leading to higher total costs than OUT, especially when lost sales costs are low (i.e., $b$ equals 10, 50, or 100). In contrast, PIL outperforms OUT by more effectively accounting for expired and lost sales terms for short product lifetimes and high lost sales costs, mainly when $m=3$ and $b=10000$. In experiments with larger lead times and longer product lifetimes, both PIL and OUT tend to increase order quantities to reduce lost sales. However, PIL incorporates projected inventory levels by more accurately calculating the expected expired quantity and lost sales, often resulting in better decisions and lower total costs for both values of $w$. PPO demonstrates an even greater ability to adapt to these trade-offs, frequently outperforming both PIL and OUT. Nevertheless, as lead times and product lifetimes extend, the \textit{dimensionality of PPO's state vector} increases accordingly, making optimization more challenging and reducing its performance gap over the other policies.

\section{Conclusions} \label{sec:section6}
In this study, we evaluated the performance of different inventory control policies for a specific perishable pharmaceutical drug under non-stationary demand, considering factors such as random yields, perishability, positive lead times, and lost sales. This analysis was inspired by a real-world case study provided by BMS. By comparing the OUT policy, the PIL policy, and the PPO algorithm with a human-driven policy, we collected \textit{managerial insights} into their cost-effectiveness and robustness across a diverse set of experimental conditions. The theoretical bounds derived for the OUT and PIL policies offered valuable direction into their optimal parameter values, while the MDP formulation of the supply chain environment was essential for effectively implementing the PPO algorithm.

Extensive numerical experiments revealed that, while simple to implement, the OUT policy exhibits limitations in experiments with short product lifetimes and high lost sales costs. However, its performance improves as these values increase, as observed in the second scenario based on real-world data. The PIL policy offers a robust and cost-effective solution across various experiments, demonstrating consistent performance without divergence under any of the experiments considered. Notably, it proves particularly efficient in the first scenario, adapted from real-world conditions with reduced product lifetime and lead time values. Finally, as observed in the second scenario, the PPO algorithm demonstrated superior performance in more complex and variable experiments, particularly those involving higher lost sales costs and the inclusion of production yields. However, challenges related to hyperparameter tuning, the computational resources required for training, and the time needed for convergence can be significant, especially as the state vector grows with longer product lifetimes and lead times.

Another salient insight from this study is that none of the three policy types consistently outperforms the others across all experimental conditions. While all three implemented policies achieved lower average costs compared to the human-driven policy, they exhibited higher standard deviations. In contrast, the human-driven policy tends to maintain large inventory levels to avoid stockouts, resulting in higher holding costs but significantly lower cost variability. Therefore, decision-makers in pharmaceutical companies should evaluate policy choices based on specific operational contexts and objectives. Focusing solely on average total cost may not be sufficient, as ethical and legal constraints associated with potential stockouts must also be considered.


\bibliographystyle{apalike}
\bibliography{mybibliography}

\appendix

\section{Proof of Lemma 1} \label{appendix:1}
\begin{proof}
We know that
$q_t = 0$ for $t= T - L+1,\ldots,T$,
and 
$\sum_{i=1}^{m-1}x_{T+1,i}=(Y_T-D_T)^+-(X_{T,1}-D_T)^+$.
Additionally, the following relations hold:
$q_{t-L} = (Y_{t+L-1,1} - D_{t+L-1})^+ - (Y_{t+L-1} - D_{t+L-1})^+ + Y_{t+L}$, 
and  
$Y_{t+L} = (Y_{t+L} - D_{t+L})^+ - (D_{t+L} - Y_{t+L})^+ + D_{t+L}$.

Substituting these relations, we can express the total cost $C^\pi$ as follows:
\begin{equation}\label{ProofLemma1: eq3}
\begin{aligned}
C^\pi = &\sum_{t=1}^{T-L} \left( 
\hat{c} (X_{t+L-1,1} - D_{t+L-1})^+ - \hat{c} (Y_{t+L-1} - D_{t+L-1})^+ + \hat{c} Y_{t+L} 
\right) + \\
&\sum_{t=L+1}^T \left( 
\hat{\theta} (X_{t,1} - D_t)^+ + \hat{h} (Y_t - D_t)^+ + \hat{b} (D_t - Y_t)^+ 
\right) - \hat{c} \left( (Y_T - D_T)^+ - (X_{T,1} - D_T)^+ \right).
\end{aligned}
\end{equation}

Rearranging terms, we obtain three key relations:
\begin{align}
    &\sum_{t=1}^{T-L} \hat{c} \left( (Y_{t+L}-D_{t+L})^+ - (Y_{t+L-1}-D_{t+L-1})^+ \right) - \hat{c} (Y_T-D_T)^+ = 0 \label{ProofLemma1: eq4} \\
    &\sum_{t=1}^{T-L} \hat{c} (X_{t+L-1,1}-D_{t+L-1})^+ + \hat{c} (X_{T,1}-D_T)^+ = \sum_{t=L+1}^T \hat{c} (X_{t,1}-D_t)^+ \label{ProofLemma1: eq5} \\
    &\sum_{t=1}^{T-L} \left( \hat{c} D_{t+L} - \hat{c} (D_{t+L}-Y_{t+L})^+ \right) = \sum_{t=L+1}^T \hat{c} \left( D_t - (D_t-Y_t)^+ \right) \label{ProofLemma1: eq6}
\end{align}

These relations hold because the inventory level is depleted at timestep $t = 1$ and remains so until timestep $t = L$. By combining \eqref{ProofLemma1: eq4}, \eqref{ProofLemma1: eq5}, and \eqref{ProofLemma1: eq6}, we establish the result stated in Lemma \ref{Lemma1}.
\end{proof}

\section{Proof of Lemma 2}\label{appendix:3}
\begin{proof}
From the expression for the expired quantity at timestep $t$ and the order of events, we have:
\begin{equation}\label{ProofLemma2: eq5}
O_t = \left(s - \sum_{j=t-L-m+1}^t D_j + \sum_{j=t-m-L+1}^{t-1} l_j - \sum_{j=t-m-L+1}^{t-1} O_j\right)^+ \geq \left(s - \sum_{j=t-m-L+1}^t D_j\right)^+ - \sum_{j=t-m-L+1}^{t-1} O_j.
\end{equation}

Taking the expectation of the left- and right-hand sides yields:
\begin{equation}\label{{ProofLemma2: eq6}}
\mathbb{E}\left[O_t\right] \geq \frac{\mathbb{E}\left[\left(s-\mathcal{D}_{m+L}\right)^+\right]}{m+L}.
\end{equation}

For any $t > L$, we can write:
\begin{equation}\label{ProofLemma3: eq1}
\left(\sum_{i=1}^m x_{t,i} - D_t\right)^+ \geq \left(s - \sum_{j=t-L}^t \xi_j\right)^+ - \sum_{j=t-L}^{t-1} O_j,
\end{equation}
and
\begin{equation}\label{ProofLemma3: eq2}
\left(D_t - \sum_{i=1}^m x_{t,i}\right)^+ \geq \left(\sum_{j=t-L}^t \xi_j - s\right)^+ - \sum_{j=t-L}^{t-1} l_j.
\end{equation}

This leads to:
\begin{equation}\label{{ProofLemma3: eq4}}
\begin{cases}
\mathbb{E}\left[\left(\sum_{i=1}^m x_{t,i} - D_t\right)^+\right] \geq \mathbb{E}\left[\left(s - \mathcal{D}_{L+1}\right)^+\right] + \frac{L\mathbb{E}\left[\left(s - \mathcal{D}_{m+L}\right)^+\right]}{m+L} \\
\mathbb{E}\left[\left(D_t - \sum_{i=1}^m x_{t,i}\right)^+\right] \geq \frac{\mathbb{E}\left[\left(\mathcal{D}_{L+1} - s\right)^+\right]}{L+1}
\end{cases}.
\end{equation}
\end{proof}

\section{Bounds Value Analysis}\label{appendix:4}
Table \ref{tab:results_bounds} presents the lower and upper bounds for both the OUT and PIL policies, calculated under the worst-case setting described in Section \ref{sec:section4}, along with the corresponding optimal values obtained.

Both policies exhibit similar trends. The upper bounds consistently align with or slightly exceed the optimal values, confirming their reliability. Conversely, the lower bounds remain equal to or below the optimal values, providing a conservative estimate. Interestingly, while the lower bounds are identical for both policies, the upper bound for the PIL policy is generally lower than that of the OUT policy. This observation suggests that the PIL policy offers more constrained bounds, possibly contributing to its cost-effectiveness in certain scenarios.

\begin{table}[ht!]
    \centering
    \caption{Lower (LB), upper (UB), and optimal (OPT) values for the OUT and PIL policies, with demand noise modeled as $\xi_t \sim \mathcal{N}(0, \bar{d} \times 15\%)$, where $\bar{d} = \max_t{d_t}$. The results are shown for parameter values $m = \{2, 3, 4\}$, $w = \{2, 4\}$, and $b = \{10, 50, 100, 1000\}$.}
    \label{tab:results_bounds}
    \scalebox{0.80}{
    \setlength{\tabcolsep}{4pt} 
    \renewcommand{\arraystretch}{0.5} 
    \begin{tabular}{c|c|cccc|cccc|cccc|cccc|cccc|cccc}
        \toprule
        \multicolumn{2}{c|}{} & \multicolumn{4}{c|}{OUT LB} & \multicolumn{4}{c|}{OUT OPT} & \multicolumn{4}{c|}{OUT UB} & \multicolumn{4}{c|}{PIL LB} & \multicolumn{4}{c|}{PIL OPT} & \multicolumn{4}{c}{PIL UB} \\ \midrule
        \multicolumn{1}{c|}{$m$} & \multicolumn{1}{c|}{$w$} & \multicolumn{4}{c|}{$b$} & \multicolumn{4}{c|}{$b$} & \multicolumn{4}{c|}{$b$} & \multicolumn{4}{c|}{$b$} & \multicolumn{4}{c|}{$b$} & \multicolumn{4}{c}{$b$} \\ 
        & & 10 & 50 & 100 & 1000 & 10 & 50 & 100 & 1000 & 10 & 50 & 100 & 1000 & 10 & 50 & 100 & 1000 & 10 & 50 & 100 & 1000 & 10 & 50 & 100 & 1000 \\ \midrule
        
        \multirow{2}{*}{2} & 2 & 2  & 4  & 5  & 7  & 2  & 5  & 6  & 8  & 4  & 6  & 7  & 9  & 2  & 4  & 5  & 7  & 3  & 4  & 5  & 7  & 3  & 5  & 6  & 8  \\
                           & 4 & 1  & 4  & 4  & 7  & 2  & 5  & 5  & 8  & 4  & 6  & 7  & 9  & 1  & 4  & 4  & 7  & 2  & 4  & 5  & 7  & 2  & 5  & 5  & 8  \\ \midrule
        
        \multirow{2}{*}{3} & 2 & 2  & 4  & 5  & 7  & 3  & 5  & 6  & 8  & 4  & 6  & 7  & 9  & 2  & 4  & 5  & 7  & 3  & 5  & 6  & 8  & 3  & 5  & 6  & 8  \\
                           & 4 & 1  & 4  & 5  & 7  & 3  & 5  & 6  & 8  & 4  & 6  & 7  & 9  & 1  & 4  & 5  & 7  & 3  & 5  & 6  & 8  & 3  & 5  & 6  & 8  \\ \midrule
        
        \multirow{2}{*}{4} & 2 & 2  & 4  & 5  & 7  & 3  & 5  & 6  & 8  & 4  & 6  & 7  & 9  & 2  & 4  & 5  & 7  & 3  & 6  & 6  & 8  & 3  & 5  & 6  & 8  \\
                           & 4 & 2  & 4  & 5  & 7  & 3  & 5  & 6  & 8  & 4  & 6  & 7  & 9  & 2  & 4  & 5  & 7  & 3  & 6  & 6  & 8  & 3  & 5  & 6  & 8  \\ 
        \bottomrule
    \end{tabular}
    }
\end{table}


\end{document}